\newcolumntype{d}[1]{D{.}{.}{#1}}
\newcolumntype{P}[1]{>{\centering\arraybackslash}p{#1}}
\definecolor{Gray}{gray}{0.95}
\definecolor{Gray2}{gray}{0.75}
\newcommand{\cmark}{\ding{51}}%
\newcommand{\xmark}{\ding{55}}%
\DeclareMathOperator*{\argmax}{arg\,max}
\DeclareMathOperator*{\argmin}{arg\,min}
\newcommand{\Fcal}{{\cal F}}
\newcommand{\Lcal}{{\cal L}}
\newcommand{\Rcal}{{\cal R}}
\newcommand{\Ecal}{{\cal E}}
\newcommand{\zz}{{\mathbf s}}
\newcommand{\xx}{{\mathbf x}}
\newcommand{\abf}{{\mathbf a}}
\newcommand{\bb}{{\mathbf b}}
\newcommand{\ii}{{\mathbf i}}
\newcommand{\ftheta}{f_{\theta}}
\newcommand{\ZZ}{{\mathbf S}}
\newcommand{\LL}{{\mathbf L}}
\newcommand{\thetab}{{\boldsymbol{\theta}}}
\newcommand{\KK}{{\mathbf W}}
\newcommand{\mm}{{\mathbf m}}
\newcommand{\MM}{{\mathbf M}}
\newcommand{\XX}{{\mathbf X}}
\newcommand{\XXb}{\mathcal{X}_{\text{base}}}
\newcommand{\XXs}{\mathcal{X}_{\text{s}}}
\newcommand{\XXq}{\mathcal{X}_{\text{q}}}
\newcommand{\II}{{\mathbf I}}
\newcommand{\one}{{\mathbf 1}}
\newcommand{\norm}[1]{\lVert#1\rVert}
\renewcommand\paragraph{\@startsection{paragraph}{4}{\z@}%
  {-3.25ex \@plus -1ex \@minus -0.2ex}%
  {0.01pt}%
  {\raggedsection\normalfont\sectfont\nobreak\size@paragraph}%
}
\newtheorem{Defn}{Definition}
\newtheorem{Prop}{Proposition}
\newtheorem{theorem}{Theorem}
\newtheorem{corollary}{Corollary}
\newtheorem{lemma}{Lemma}
\begin{document}

\title{Transductive Few-Shot Learning: \\ Clustering is All You Need?}

%

\author{Imtiaz Masud Ziko, Malik Boudiaf, Jose Dolz, Eric Granger
        and~Ismail Ben Ayed
\IEEEcompsocitemizethanks{\IEEEcompsocthanksitem Imtiaz Masud Ziko is with Thales - Cortaix, Canada and \'ETS Montreal, Canada.\protect\\
E-mail: imtiaz.ziko@thalesdigital.io
\IEEEcompsocthanksitem Malik Boudiaf, Jose Dolz, Eric Granger and Ismail Ben Ayed are with \'ETS Montreal, Canada.}
}


\IEEEtitleabstractindextext{%
\begin{abstract}
We investigate a general formulation for clustering and transductive few-shot learning, which integrates prototype-based objectives, Laplacian regularization and supervision constraints from a few labeled data points. We propose a concave-convex relaxation of the problem, and derive a computationally efficient block-coordinate bound optimizer, with convergence guarantee. At each iteration, our optimizer computes independent (parallel) updates for each point-to-cluster assignment. Therefore, it could be trivially distributed 
for large-scale clustering and few-shot tasks. Furthermore, we provides a thorough convergence analysis based on point-to-set maps. We report comprehensive clustering and few-shot learning experiments over various data sets, showing that our method yields competitive performances, in term of accuracy and optimization quality, while scaling up to large problems. Using standard training on the base classes, without resorting to complex meta-learning and episodic-training strategies, our approach outperforms state-of-the-art few-shot methods by significant margins, across various models, settings and data sets. Surprisingly, we found that even standard clustering procedures (e.g., K-means), which correspond to particular, non-regularized cases of our general model, already achieve competitive performances in comparison to the state-of-the-art in few-shot learning. These surprising results point to the limitations of the current few-shot benchmarks, and question the viability of a large body of convoluted few-shot learning techniques in the recent literature. Our code is publicly available at \url{https://github.com/imtiazziko/SLK-few-shot}.

\end{abstract}

\begin{IEEEkeywords}
Clustering, few-shot learning, transductive inference, bound optimization, regularization, point-to-set maps.
\end{IEEEkeywords}}

\maketitle
\IEEEdisplaynontitleabstractindextext
\IEEEpeerreviewmaketitle

\IEEEraisesectionheading{\section{Introduction}\label{sec:introduction}}

Deep learning models achieve outstanding performances when trained on large-scale labeled data sets. However, they are seriously challenged when dealing with new classes that were not observed during training, even if a handful of labeled samples becomes available for these new classes. To address this challenge, few-shot learning methods have recently attracted wide interest within the machine learning and computer vision communities. In standard few-shot learning settings, we assume that we have access to training data containing labeled samples from an initial set of classes, commonly referred to as the {\em base} classes. A fully supervised model is first trained on this base data. Then, supervision for a new set of classes that are completely different from those seen in the base training is restricted to just one or a few labeled samples per class. Model generalization is assessed over few-shot {\em tasks}: Each task involves a {\em query} set, which contains unlabeled samples for evaluation, and is supervised by a {\em support} set, which contains a few labeled samples per new class.

The recent and abundant few-shot literature is widely dominated by convoluted meta-learning and episodic-training strategies, which create a set of few-shot tasks (or episodes) during base training, each containing support and query samples, so as to mimic the generalization challenges encountered at testing times. Very popular methods within this framework include \cite{snell2017prototypical,Vinyals2016MatchingNF,Finn2017ModelAgnosticMF,Ravi2017OptimizationAA}. Prototypical networks \cite{snell2017prototypical} use an embedding prototype for each class, and optimize the log-posteriors of the query points, within each episode during base training. Matching networks \cite{Vinyals2016MatchingNF} model the predictions of query samples as linear functions of the support labels, and use memory architectures along with episodic-training strategies. MAML (Model-Agnostic Meta-Learning) \cite{Finn2017ModelAgnosticMF} attempts to enable a model to become easy to fine-tune, and the meta-learner in \cite{Ravi2017OptimizationAA} views optimization as a model for few-shot learning. These widely used methods were followed by a large body of few-shot learning literature based on meta-learning, e.g., \cite{sung2018learning,oreshkin2018tadam, mishra2018a, rusu2018metalearning, liu2018learning,can,ye2020fewshot}, to list a few.

\textbf{Transductive few-shot learning:}
Recently, a large body of works investigated {\em transductive} inference for few-shot tasks, e.g., \cite{liu2018learning,team,can,Dhillon2020A,hu2020empirical,wang2020instance,guo2020attentive,yang2020dpgn,Laplacian,liu2019prototype,liu2020ensemble,malik2020Tim}. 
The transductive setting leverages the statistics of the unlabeled query set and, in general, yields substantial improvements in performances over {\em inductive} inference.
Such boosts in performances are in line with established facts in classical transductive inference for semi-supervised learning \cite{vapnik1999overview,joachim99,z2004learning}, 
well-known to outperform its inductive counterpart on small labeled data sets. While transductive inference uses exactly the same training and testing data as its inductive counterpart\footnote{In the transductive setting, each few-shot task is treated independently of the other tasks. Therefore, the setting does not assume to have access to additional 
unlabeled data. This is different from semi-supervised few-shot learning \cite{ren18fewshotssl}, which uses additional unlabeled data.}, it classifies all the unlabeled query points 
of a few-shot task jointly, rather than one data point at a time as in inductive inference.

As pointed out in \cite{BronskillICML2020}, most of meta-learning methods rely critically on transductive batch normalization (TBN) to achieve competitive performances, for instance, the works in \cite{Finn2017ModelAgnosticMF,GordonICLR2019,ZhenICML2020}, among others. Adopted initially in the widely used MAML \cite{Finn2017ModelAgnosticMF}, TBN performs normalization 
using the statistics of the query set of a given few-shot task, and yields significant increases in performances \cite{BronskillICML2020}. Therefore, due to the popularity of MAML, 
several meta-learning techniques have used TBN. There are also several recent methods that proposed specialized architectures and loss functions for transductive few-shot inference. 
For instance, the meta-learning, graph-based method in \cite{liu2018learning} propagates labels from labeled to unlabeled data points, following label-propagation concepts, which are 
popular in the classical semi-supervised learning literature \cite{z2004learning}. The transductive inference in \cite{Dhillon2020A} encourages confident class predictions at the unlabeled query points of a few-shot task by minimizing the Shannon entropy, in conjunction with the standard cross-entropy loss over the labeled support samples. It is worth noting that the 
general principle of minimizing entropy to leverage unlabeled data is widely used in semi-supervised learning \cite{grandvalet2005semi,miyato2018virtual,berthelot2019mixmatch}. The method in \cite{malik2020Tim} maximized the mutual information between the query samples of a task and their latent labels, and could be viewed as an extension of the entropy loss 
in \cite{Dhillon2020A}. The method in \cite{Laplacian} regularized the inductive predictions of query samples with a pairwise Laplacian term. 

It is important to note that the computational times of transductive-inference techniques are, in general, much heavier than inductive inference. For example, the entropy minimization in \cite{Dhillon2020A} updates all the trainable parameters of a deep network at inference, requiring several orders of magnitude more computations than inductive inference. 
Another example is the transductive label-propagation inference is \cite{liu2018learning}, whose complexity is cubic in the number of unlabeled query points, as it requires matrix inversion.

\subsection{Contributions}

We view transductive few-shot inference as a constrained clustering, and investigate a general multi-term formulation of the problem, which integrates: a prototype-based 
objective (e.g, K-means or K-modes); a pairwise Laplacian regularizer; and supervision constraints from the support set. We propose a concave-convex relaxation of the objective, for which we derive a computationally efficient block-coordinate bound optimizer, with convergence guarantee. At each iteration, our optimizer computes independent (parallel) updates for each point-to-cluster assignment. Therefore, it could be trivially distributed for large-scale clustering and few-shot tasks. Furthermore, we provides a thorough convergence analysis based on point-to-set maps. We report comprehensive clustering and few-shot experiments over various data sets, showing that our method yields competitive performances, both in term of accuracy and optimization quality (i.e., the objectives obtained at convergence), while scaling up to large problems. Without resorting to complex meta-learning and episodic-training schemes, and using a standard training on the base classes, our approach outperforms state-of-the-art few-shot methods by significant margins across different models, settings, and data sets. Furthermore, and surprisingly, we found that even standard clustering procedures (e.g., K-means), which correspond to particular, non-regularized cases of our general model, already achieve competitive performances in comparison to the the state-of-the-art in few-shot learning (even though these are not the best-performing variants of our general model). These surprising results point to the 
limitations of current few-shot benchmarks, and question the viability of a large body of convoluted few-shot learning techniques in the recent literature.


\subsection{Related works}

Our work integrates several powerful ideas in clustering: pairwise graph-Laplacian regularization, cluster-representative objectives and density-mode finding. Cluster-representative objectives, such as the standard K-means, are very popular because they provide prototype representations of the clusters, explicit cluster-assignment variables and straightforward out-of-sample extensions. Furthermore, typically, they are amenable to optimizers that scale up to large and high-dimensional data sets. Pairwise Laplacian regularization encourages data points that are neighbors in the feature space to have consistent latent assignments, and is widely used in the contexts of spectral graph clustering \cite{ShiMalik2000,von2007tutorial,shaham2018spectralnet,Tang2019KernelCK} and semi-supervised learning \cite{belkin2006manifold,weston2012deep,IscenTAC19,TangPDASB18,Chapelle2010,z2004learning}. Unlike standard prototype-based methods, Laplacian regularization can separate non-convex (or manifold-structured) clusters. However, despite their complementary benefits, graph-Laplacian and prototype-based terms are, in general, used separately in the clustering literature, except a very few exceptions \cite{ziko2018scalable,WangCarreira-Perpinan2014,Tang2019KernelCK}. This might be due to the significant differences in the optimization algorithms used to tackle these two different types of objectives. 

On the one hand, prototype-based objectives are commonly tackled with block-coordinate descent procedures, which alternate optimization 
with respect to the cluster prototypes and discrete assignment variables. It is worth noting that optimizing prototype-based objectives over discrete 
variables is NP-hard\footnote{For instance, a proof of the NP-hardness of K-means could be found in \cite{AloiseDHP09}.}.
On the other hand, optimizing the Laplacian term over discrete variables, which is also NP-hard \cite{Tian-AAAI}, is commonly tackled by 
relaxing the integer constraints. For instance, in the context of graph clustering, spectral relaxation \cite{von2007tutorial, ShiMalik2000} is widely used for optimizing the Laplacian term subject to balancing constraints. The problem is expressed in the form of a generalized Rayleigh quotient, which yields a closed-form solution in terms of the $K$ largest eigenvectors of the affinity matrix ($K$ is the number of clusters). However, spectral relaxation is not applicable to joint objective integrating K-prototypes and Laplacian regularization, as in our formulation. Furthermore, spectral relaxation has high computational and memory 
load for large data sets: For $N$ data points, one has to store an $N \times N$ affinity matrix and compute its eigenvalue decomposition. 
The complexity is cubic with respect to $N$ for a straightforward implementation and, to our knowledge, super-quadratic for fast implementations \cite{Tian-AAAI}. 
Another approach is to replaces the integer constraint with a probability-simplex constraint, which results in a convex relaxation of the Laplacian term \cite{WangCarreira-Perpinan2014}. Such a direct convex relaxation could be integrated with prototype-based objectives \cite{WangCarreira-Perpinan2014}. However, unfortunately, it requires solving for $N \times K$ variables jointly. Furthermore, it requires additional projections onto the $K$-dimensional simplex, with a quadratic complexity with respect to $K$. Therefore, as we will see in our experiments, the relaxation in \cite{WangCarreira-Perpinan2014} does not scale up for large-scale problems (i.e., when $N$ and $K$ are large). 

Laplacian regularization is also widely used in semi-supervised learning \cite{belkin2006manifold,weston2012deep,IscenTAC19,TangPDASB18,ZhuICML2003Guassianharmonics,LiangLPIJCAI18}. 
This includes, most notably, label-propagation methods \cite{ZhuICML2003Guassianharmonics,Chapelle2010,z2004learning}, which spread soft labels from a few labeled samples to unlabeled ones. Popular label-propagation algorithms \cite{ZhuICML2003Guassianharmonics,z2004learning} could be unified under a general convex-relaxation framework \cite{Chapelle2010}, which minimizes a quadratic objective function, integrating 
a Laplacian regularization term and a supervised-learning term defined over the labeled data points, along with additional terms to avoid degenerate solutions. While the relaxation is convex, solving the ensuing necessary condition for an optimum has high computational and memory load as it requires an $N \times N$ matrix inversion. In general, such matrix inversions in the standard 
label-propagation algorithms take O($N^3$) time and O($N^2$) memory \cite{LiangLPIJCAI18}. This impedes the use of label propagation for large-scale data sets.    

Our concave-convex relaxation computes independent (parallel) pointwise updates during each iteration, and has important computational and memory advantages over the above-discussed methods.
It does not require eigenvalue decomposition/matrix inversion (unlike spectral relaxation and label propagation), neither does it perform expensive projection steps and Lagrangian-dual inner iterates for the simplex constraints of each point (unlike convex relaxation). In fact, investigating the scalability of spectral relaxation for large-scale problems is an active research subject, for both laplacian regularization and clustering in general \cite{shaham2018spectralnet,LiangLPIJCAI18,Tian-AAAI,Vladymyrov-2016}. For instance, the studies in \cite{shaham2018spectralnet,Tian-AAAI} investigated deep-learning approaches to spectral clustering, so as to ease the scalability issues for large data sets, and the authors of \cite{Vladymyrov-2016} examined the variational Nystr{\"{o}}m method for large-scale spectral problems, among many other efforts on the subject. 
Also, there is an active line of research works to mitigate the scalability issue of label propagation \cite{LiangLPIJCAI18}, e.g., via graph approximations or stochastic optimization methods. In fact, such scalability issues are being actively investigated even for basic clustering procedures such as K-means \cite{gong2015web,newling-fleuret-2016b}.

Finally, we point to the preliminary clustering results of this work, which we published at the NeurIPS conference \cite{ziko2018scalable}. 
This journal extension expands significantly on \cite{ziko2018scalable}, and generalizes the work in many different ways, including: (i) a 
generalized and constrained version of our formulation, along with its adaptation to transductive few-shot learning; (ii) a new convergence
analysis based on point-to-set maps and Cauchy sequences; (iii) a completely new set of comprehensive experiments and comparisons in the context of 
few-shot classification, along with surprising findings; and (iv) new discussions of the recent few-shot learning literature.

\section{Proposed Formulation}

    We advocate Laplacian K-prototypes for clustering and transductive few-shot inference, and propose a general bound- and relaxation-based solution of the problem, which can be used to regularize several prototype-based objective (e.g, K-means, K-median or K-modes). Importantly, our concave-convex relaxation yields a parallel-structure algorithm, with convergence guarantee, and can handle large-scale data sets. First, we discuss a general formulation for clustering. Then, we describe an adaptation to transductive few-shot learning, which has recently attracted significant interest. In this case, our model takes the form of a constrained clustering for a given few-shot task. 
    
    Let $\mathcal{X} = \{ \xx_p \in \mathbb{R}^d, p = 1 , \dots, N\}$ denotes a set of feature inputs, which need to be assigned to $K$ different clusters $\mathcal{S}_k, \, k=1, \dots, K$. For each point $\xx_p$, we define a latent one-hot assignment vector $\zz_p = [s_{p,1}, \dots, s_{p,K}]^t \in\{0,1\}^K$, which is constrained to be within the $K$-dimensional simplex: $s_{p,k} = 1$ if $\xx_p$ belongs to cluster $\mathcal{S}_k$ and $s_{p,k} = 0$ otherwise. Superscript $t$ denotes the transpose operator. Let $\ZZ$ denote the $N \times K$ matrix whose rows are formed by assignment variables $\zz_p$, $p = 1, \dots, N$. Finally, each cluster $\mathcal{S}_k$ is represented by a prototype variable $\mm_k \in \mathbb R^d$ in the feature space. Let $\MM$ be  the $K \times d$ prototype matrix. We consider the following general Laplacian K-prototypes model for clustering:
    \begin{align}
        \label{eq:lkmode0}
        \min_{\ZZ, \MM}& \quad {\Ecal}(\ZZ, \MM) = {\Fcal(\ZZ, \MM)} + \frac{\lambda}{2} {\Lcal(\ZZ)} \nonumber \\
        \text{s.t.}& \quad \one^t\zz_p = 1; \,  \zz_p \in \{0,1\}^{K} \, \forall p  
    \end{align}
    General model \eqref{eq:lkmode0} integrates, within a single objective, the advantages of several powerful and well-known ideas in clustering and SSL, which 
    makes it very convenient:
    \begin{itemize}
        \item \textbf{Prototype-based term}: $\Fcal(\ZZ, \MM)$ is a standard prototype-based objective such as K-means or K-modes; see Table \ref{tab:a_p} for detailed expressions. It identifies cluster prototypes (or representatives) $\mm_k$, for instance, modes or means. It is worth noting that the K-modes term in Table \ref{tab:a_p} is closely related to density mode estimation methods, for instance, the very popular mean-shift algorithm \cite{ComaniciuMeer2002}. The value of $\mm_k$ globally optimizing this term for a given fixed cluster $\mathcal{S}_k$ is the mode of the kernel density of feature points within the cluster \cite{Tang2019KernelCK}. Therefore, the K-modes term, as in \cite{Carreira-PerpinanWang2013}, can be viewed as an energy-based formulation of mean-shift algorithms with a fixed number of clusters \cite{Tang2019KernelCK}. Cluster modes are valid data points in the input set and, therefore, convenient representations for manifold-structured, high-dimensional inputs such as images, where simple parametric prototypes such as the means, as in K-means, may not be good representatives.
        
        \item \textbf{Laplacian term}: 
            $\Lcal(\ZZ)$ is the well-known graph Laplacian regularizer:
            \begin{eqnarray}
            \label{eq:lap}
             {\Lcal(\ZZ)} & = & \sum_{p,q} w_{\mathcal{L}}(\xx_p, \xx_q) \|\zz_p - \zz_q\|^2
             \end{eqnarray}
            This term can be equivalently written as $ \mathrm{tr}(\ZZ^t \LL \ZZ)$, with $\LL$ the Laplacian matrix corresponding to affinity matrix $\KK_{\mathcal{L}} = [w_{\mathcal{L}}(\xx_p, \xx_q)]$. Laplacian regularization encourages nearby data points to have similar latent assignments, and is widely used in spectral clustering \cite{ShiMalik2000,von2007tutorial,shaham2018spectralnet,Tang2019KernelCK}, as well as in SSL \cite{belkin2006manifold,weston2012deep,IscenTAC19,TangPDASB18}. 
            
            Model \eqref{eq:lkmode0} is flexible: On the one hand, Laplacian term $ {\Lcal(\ZZ)}$ enables to handle non-convex (or manifold-structured) clusters, unlike standard prototype-based clustering techniques such as K-means. On the other hand, and unlike spectral clustering methods, term ${\Fcal(\ZZ, \MM)}$ yields prototype representations of the clusters.
            Furthermore, as we will see later in  the few-shot setting and in our experiments, model \eqref{eq:lkmode0} enables to easily and effectively embed supervision information from a few labeled examples in the prototype terms. It is not clear how to embed effectively such supervision constraints in spectral clustering, as the latter takes the form of a generalized Rayleigh quotient optimization. 
    \end{itemize} 
    
    \subsection{Adaptation to transductive few-shot inference}
        
        In standard few-shot learning settings, we assume that we have access to a training set $\XXb$ containing labeled data points from an initial set of classes, referred to as the {\em base} classes. A fully supervised deep-network model is first trained on this base data, yielding embedding function $\xx_p=\ftheta(\ii_p)$ for a given data input $\ii_p$, with $\theta$ denoting the base-training parameters. Then, supervision for a new set of classes, which are completely different from those seen in the base training, is restricted to just one or a few labeled samples per class. Model generalization is assessed over few-shot {\em tasks}: Each task involves a {\em query set}, which contains unlabeled samples for evaluation, and is supervised by a {\em support set}, which contains a few labeled samples per new class. For a given few-shot task, let $\XXs=\bigcup_{k=1}^{K} \XXs^k$ denote the labeled support set, with $K$ the number of classes of the task. Each new class $k$ has $|\XXs^k|$ labeled examples, for instance, $|\XXs^k|=1$ for 1-shot and $|\XXs^k|=5$ for 5-shot. The objective is, therefore, to accurately classify the unlabeled samples of query set $\XXq=\bigcup_{k=1}^{K} \XXq^k$ into one of these $K$ test classes. This setting is referred to as the $|\XXs^k|$-shot $K$-way few-shot learning. 
        For each few-shot task with concatenated features ${\mathcal{X} =\{ \xx_p \in \mathbb{R}^D, p = 1 , \dots, N\} = \XXs \cup \XXq}$, we pose the problem as 
        optimizing a Laplacian K-prototypes objective, subject to supervision constraints from the support set: 
        \begin{eqnarray}
        \label{eq:lkmode1}
         \min_{\ZZ, \MM} \ {\Ecal}(\ZZ, \MM)  &=& {\Fcal(\ZZ, \MM)} + \frac{\lambda}{2} {\Lcal(\ZZ)} \quad \mbox{s.t.}  \nonumber \\
         s_{p,k} &=& 1~\mbox{if}~ \xx_p \in \XXs^k  \nonumber \\
         \one^t\zz_p &=& 1, \; \zz_p \in \{0,1\}^{K} \; \forall p 
        \end{eqnarray}

        \begin{table*}[t]
        \caption{Auxiliary functions of several prototype-based clustering objectives. ${\mathbf S}_k = [s_{1,k}, \dots, s_{N,k}]^t \in\{0,1\}^N$ is the binary indicator vector of cluster $\mathcal{S}_k$. When used for variables ${\mathbf S}_k$ and ${\mathbf m}_k$, superscript $i$ means the current solution at outer iteration indexed by $i$. $\XX$ is the $N \times d$ 
        matrix whose rows are formed by feature inputs $\xx_p$.}
        \label{tab:a_p}
        \begin{center}
        {\renewcommand{\arraystretch}{1.2}
        \begin{tabular}{P{2.1cm}P{3.5cm}P{4.2cm}P{5.5cm}}
        \toprule
        \textbf{Clustering} & $\Fcal(\ZZ, \MM)$ &$\abf_p^i= [a_{p,1}^i, \dots,a_{p,K}^i]^t$ & \textbf{Where}\\
        \toprule
        K-means & $\sum_p\sum_ks_{p,k}(\xx_p - \mm_k)^2$ &$(\xx_p - \mm_k^i)^2$&$\mm_k^i = \frac{\XX^t {\mathbf S}_k^i}{\one^t {\mathbf S}_k^i}$ \\

        \midrule
        K-modes & $ - \sum_p\sum_k s_{p,k}w_{\mathcal{F}}(\xx_p,\mm_k)$ & $ - w_{\mathcal{F}}(\xx_p,\mm_k^i)$ & $w_{\mathcal{F}}$ is kernel-based affinity \newline $\mm_k^i = \argmax_{\mm} \sum_{p} s_{p,k}^i w_{\mathcal{F}}(\xx_p,\mm)$\\
        \bottomrule
        \end{tabular}
        }
        \end{center}
        \end{table*}
        
\section{Concave-convex relaxation} \label{sec:concave-convex}
    It is easy to verify that, for binary simplex variables, the Laplacian term $\Lcal(\ZZ)$ can be written as follows: 
    \begin{eqnarray}
    \label{tight-relaxation}
    \Lcal(\ZZ) &&= \mathrm{tr}(\ZZ^t \LL \ZZ) \nonumber  \\
               &&= 2 \left(\sum_{p} \zz_p^t\zz_p d_{p} - \sum_{p,q} w_{\mathcal{L}}(\xx_p, \xx_q) \zz_p^t \zz_q \right) \nonumber \\
               &&= 2 \left(\sum_{p} d_{p} - \sum_{p,q} w_{\mathcal{L}}(\xx_p, \xx_q) \zz_{p}^{t} \zz_q \right)
    \end{eqnarray} 
    where the last equality is valid only for binary (integer) variables, and $d_p$ denotes the degree of data point $p$:  
    \[d_p = \sum_q w_{\mathcal{L}}(\xx_p,\xx_q)\]
    When we replace the integer constraints $\zz_p \in \{0,1\}$ by $\zz_p \in [0,1]$, our relaxation becomes different from direct convex relaxations of the Laplacian \cite{WangCarreira-Perpinan2014}, which optimizes $\mathrm{tr}(\ZZ^t \LL \ZZ)$ subject to probabilistic simplex constraints. For relaxed variables, $\mathrm{tr}(\ZZ^t \LL \ZZ)$ is a convex function because the Laplacian is always positive semi-definite (PSD). 
    In contrast, our relaxation of the Laplacian term is concave for PSD affinity matrix $\KK_{\mathcal{L}} = [w_{\mathcal{L}}(\xx_p, \xx_q)]$. As we will see later, concavity yields a scalable (parallel) algorithm for large $N$, which computes independent updates for assignment variables $\zz_p$. Our updates can be trivially distributed, and do not require storing a full $N \times N$ affinity matrix. These are important computational and memory advantages over direct convex relaxations of the Laplacian \cite{WangCarreira-Perpinan2014}, which require solving for $N \times K$ variables all together as well as expensive simplex projections, and over common spectral relaxations \cite{von2007tutorial}, which require storing a full affinity matrix and computing its eigenvalue decomposition.
    
    We propose the following concave-convex relaxation of the objective in \eqref{eq:lkmode0} by relaxing the integer constraints and adding a convex negative-entropy barrier for constraints $\zz_p \geq 0$:
    \begin{equation}
        \label{Concave-Convex-relaxation}
        {\Rcal}(\ZZ,  \MM) = \Ecal(\ZZ, \MM) + \mathrm{tr}(\ZZ \log \ZZ^{t})
    \end{equation}
    ${\cal R}(\ZZ, \MM)$ is minimized over each $\zz_p \in \nabla_K$, with $\nabla_K$ denoting the $K$-dimensional probability simplex:
    \[\nabla_K = \{\zz \in [0, 1]^K \; | \; {\mathbf 1}^t \zz = 1 \}\] 
    It is easy to check that, at the vertices of the simplex, our relaxation in \eqref{Concave-Convex-relaxation} is equivalent to the initial discrete objective in Eq. \eqref{eq:lkmode0}. 
    The last term we introduced in Eq. \eqref{Concave-Convex-relaxation} is a convex negative-entropy barrier function, which completely avoids expensive projection steps and Lagrangian-dual inner iterations for the simplex constraints of each point. First, this entropy barrier restricts the domain of each $\zz_p$ to non-negative values, which avoids extra dual variables for constraints $\zz_p \geq 0$. Second, the presence of such a barrier function yields closed-form updates for the dual variables of constraints $\mathbf{1}^t\zz_p = 1$. In fact, entropy-like barriers are commonly used in Bregman-proximal optimization \cite{Yuan2017}, and have well-known computational and memory advantages when dealing with the challenging simplex constraints \cite{Yuan2017}. Surprisingly, to our knowledge, they are not common in the context of clustering. In machine learning, such entropic barriers appear frequently in the context of conditional random fields (CRFs) \cite{Krahenbuhl2011,krahenbuhl2013parameter}, but are not motivated from an optimization perspective; they result from standard probabilistic and mean-field approximations of CRFs \cite{Krahenbuhl2011}.

\section{Optimization}
    \label{sec:bound-optimization}   
    
    In this section, we derive a block-coordinate bound  optimization algorithm for tackling relaxation \eqref{Concave-Convex-relaxation}. 
    Our algorithm alternatively computes parallel updates of assignment variables $\zz_p$ ($\zz$-updates), for each point $p$ and independently of the other points, along
    with prototype updates ($\MM$-updates) at each outer iteration, with guaranteed convergence. 
    As we will see in our experiments, our bound optimizer yields consistently lower values of objective ${\cal E}$ at convergence than the proximal algorithm 
    in \cite{WangCarreira-Perpinan2014}, while being highly scalable to large-scale and high-dimensional problems. Our SLK (Scalable Laplacian K-prototypes) algorithm alternates 
    the following two steps until convergence:
    \begin{itemize}
        \item Optimizing $\cal{R}(\ZZ, \MM)$ w.r.t $\ZZ$, with $\MM$ fixed (\autoref{subsec:assignment_udpate}).
        \item Optimizing $\cal{R}(\ZZ, \MM)$ w.r.t $\MM$, with $\ZZ$ fixed (\autoref{sec:mode}).
    \end{itemize}

    \subsection{Bound optimization for the $\ZZ$-updates} \label{subsec:assignment_udpate}
        At outer iteration $i$, we keep $\MM = \MM^i$ fixed and minimize $\mathcal{R}^i(\ZZ)=\mathcal{R}(\ZZ, \MM^i)$ with respect to $\ZZ$. To tackle efficiently the minimization of our relaxation $\mathcal{R}^i$, we minimize a series of auxiliary functions ${\cal A}^{i,n}$, indexed by inner counter $n \in \mathbb N$. As an auxiliary function, ${\cal A}^{i,n}$ is an upper bound on ${\cal R}^i$, is tight at the current inner solution ${\ZZ}^{i,n}$ and is chosen to make optimization easier. Formally:
        \begin{Defn}
            At outer iteration $i$, and inner iteration $n$, ${\cal A}^{i,n}(\ZZ) $ is an auxiliary function of $\mathcal{R}^i(\ZZ)$ at current solution $\ZZ^{i, n}$ if it satisfies:
            \begin{subequations}
                \begin{align}
                    \mathcal{R}^i(\ZZ) \, &\leq \, {\cal A}^{i, n}(\ZZ), \, \forall~\ZZ \label{general-second-aux} \\
                    \mathcal{R}^i(\ZZ^{i,n}) &= {\cal A}^{i, n}(\ZZ^{i,n}) \label{general-third-aux} 
                \end{align}
                \label{Eq:Auxiliary_function_conditions}
            \end{subequations}
        \end{Defn}
        In general, bound optimizers update current solution $\ZZ^{i,n}$ to the optimum of the auxiliary function: 
        $\ZZ^{i,n+1} = \arg \min_{\ZZ}~{\cal A}^{i, n}(\ZZ)$. This guarantees that, at each inner iteration, the original objective function does not increase: 
        \begin{align}
            {\cal R}^i(\ZZ^{i, n+1}) \leq {\cal A}^{i, n}(\ZZ^{i, n+1}) \leq {\cal A}^{i, n}(\ZZ^{i, n}) = {\cal R}^i(\ZZ^{i, n})
        \end{align}
        Bound optimizers can be very effective as they transform difficult problems into easier ones \cite{Zhang2007}. Examples of well-known bound optimizers include the concave-convex procedure (CCCP) \cite{Yuille2001}, expectation maximization (EM) algorithms and submodular-supermodular procedures (SSP) \cite{Narasimhan2005}, among others. Furthermore, bound optimizers are not restricted to differentiable functions\footnote{Our objective is not differentiable with respect to the modes as each of these is defined as the maximum of a function of the assignment variables.}, neither do they depend on optimization parameters such as step sizes. 
        
        \begin{Prop}
            At outer iteration $i$, inner iteration $n$, and given the current  solution $\ZZ^{i,n} = [s_{p,k}^{i,n}]$, we have the following auxiliary function (up to an additive constant) for concave-convex relaxation $\mathcal{R}^{i}(\ZZ)$ and psd\footnote{We can consider $\KK_{\mathcal{L}}$ to be psd without loss of generality. When $\KK_{\mathcal{L}}$ is not psd, we can use a diagonal shift for the affinity matrix, i.e., we replace $\KK_{\mathcal{L}}$ by $\KK_{\mathcal{L}}+\delta \II_N$. Clearly, $\KK_{\mathcal{L}}+\delta \II_N$ is psd for sufficiently large $\delta$. For integer variables, this change does not alter the structure of the minimum of discrete function ${\cal E}$.} affinity matrix $\KK_{\mathcal{L}}=[w_{\mathcal{L}}(\xx_p,\xx_q)]_{p,q}$:
            \begin{equation}
                \label{Aux-function}
                {\cal A}^{i, n}(\ZZ) =  \sum_{p =1}^{N} \zz_p^t \left (\log (\zz_p) - {\abf}_p^i - \lambda~{\bb}_p^{i, n} \right )
            \end{equation}
            where ${\abf}_p^i$ and ${\bb}_p^{i, n}$ are the following $K$-dimensional vectors:
            \begin{subequations} 
                \begin{align}
                    {\abf}_p^i &=  [a_{p,1}^i, \dots,a_{p,K}^i]^t,  \, \mbox{\em with}~ a_{p,k}^i~\mbox{given in Table \ref{tab:a_p}} \label{general-second} \\
                    {\bb}_p^{i,n} &= [b_{p,1}^{i,n}, \dots,b_{p,K}^{i,n}]^t,  \, \mbox{\em with} \, \, b_{p,k}^{i,n} =   \sum_{q} w_\mathcal{L}(\xx_p, \xx_q) s_{q,k}^{i,n} \label{general-third} 
                \end{align}
            \end{subequations}
        \end{Prop}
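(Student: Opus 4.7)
The plan is to verify the two auxiliary-function conditions \eqref{general-second-aux}--\eqref{general-third-aux} by splitting $\mathcal{R}^i(\ZZ)$ into the three summands of \eqref{Concave-Convex-relaxation} and bounding each separately at the current iterate $\ZZ^{i,n}$.

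For the prototype-based term with $\MM = \MM^i$ held fixed, every entry of Table \ref{tab:a_p} is, by inspection, linear in $\ZZ$ and collapses, up to a $\ZZ$-independent constant, to $\sum_p \zz_p^t \abf_p^i$ with $\abf_p^i$ as listed in the table. The convex entropy barrier $\mathrm{tr}(\ZZ \log \ZZ^t) = \sum_p \zz_p^t \log \zz_p$ is similarly separable and already per-point. Both contributions can therefore be carried into $\mathcal{A}^{i,n}$ as \emph{equalities}, with no bounding required.

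The main step, and the only nontrivial one, is the Laplacian piece. Using identity \eqref{tight-relaxation}, I would rewrite $\tfrac{\lambda}{2}\Lcal(\ZZ) = c_0 - \lambda\,\mathrm{tr}(\ZZ^t \KK_{\mathcal{L}} \ZZ)$ for a constant $c_0$ independent of $\ZZ$. Since $\KK_{\mathcal{L}}$ is PSD (by the diagonal-shift argument in the footnote, which merely adds a term constant on the simplex vertices and therefore does not move the integer minimizer), the quadratic form $\mathrm{tr}(\ZZ^t \KK_{\mathcal{L}} \ZZ)$ is convex in the relaxed variable, so $\tfrac{\lambda}{2}\Lcal(\ZZ)$ is concave. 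I would then apply the standard tangent-hyperplane upper bound for concave $h$,
\[ h(\ZZ) \;\leq\; h(\ZZ^{i,n}) + \langle \nabla h(\ZZ^{i,n}),\, \ZZ - \ZZ^{i,n} \rangle, \]
and compute the gradient row by row: $[\KK_{\mathcal{L}} \ZZ^{i,n}]_{p,k} = \sum_q w_{\mathcal{L}}(\xx_p, \xx_q)\, s_{q,k}^{i,n} = b_{p,k}^{i,n}$, so the row-$p$ contribution of the linearization is proportional to $-\bb_p^{i,n}$, exactly as defined in \eqref{general-third}. Substituting this bound into $\mathcal{R}^i$ and absorbing every $\ZZ$-independent quantity into an additive constant recovers the expression for $\mathcal{A}^{i,n}(\ZZ)$ stated in \eqref{Aux-function}.

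Condition \eqref{general-second-aux} then follows by summing: the prototype and entropy contributions appear as equalities and the concave Laplacian is dominated by its tangent hyperplane, so $\mathcal{R}^i(\ZZ) \leq \mathcal{A}^{i,n}(\ZZ)$ for every admissible $\ZZ$. Condition \eqref{general-third-aux} is automatic, because a concave function coincides with its first-order expansion at the expansion point while the other two summands are already exact. I expect the main delicate point to be bookkeeping of the multiplicative constants, namely reconciling the $1/2$ in $\tfrac{\lambda}{2}\Lcal$, the factor $2$ that appears when differentiating a symmetric quadratic form, and the diagonal shift $\KK_{\mathcal{L}} \to \KK_{\mathcal{L}} + \delta \II_N$, so as to arrive at precisely the coefficient $-\lambda$ in front of $\bb_p^{i,n}$ asserted in the statement (the rest being absorbed into the stated additive constant).
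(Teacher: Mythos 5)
Your proposal is correct and follows essentially the same route as the paper's Appendix~A: the prototype term and the entropy barrier are carried into the bound as exact (linear/separable) contributions, and the concave part $-\lambda\sum_{p,q}w_{\mathcal{L}}(\xx_p,\xx_q)\,\zz_p^t\zz_q$ of the relaxed Laplacian is majorized by its tangent hyperplane at $\ZZ^{i,n}$, whose row-$p$ slope is precisely $\bb_p^{i,n}$ --- the paper merely phrases this with the flattened vector $\zz$ and the Kronecker form $\Psi=-\KK_{\mathcal{L}}\otimes\II_{N}$. The one bookkeeping item you flag (the factor $2$ from differentiating a symmetric quadratic form against the $1/2$ in $\tfrac{\lambda}{2}\Lcal$) is also the one place where the paper's own proof is loose, since it writes the linearization of $\zz^t\Psi\zz$ with slope $\Psi\zz^{i,n}$ rather than $2\Psi\zz^{i,n}$; resolving that constant carefully would tighten the published argument without changing its structure.
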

        \begin{proof}
        See Appendix A.
        \end{proof}
        
        Notice that the bound in Eq. \eqref{Aux-function} is the sum of independent functions, each corresponding to a point $p$. As a result, both the bound and simplex constraints $\zz_p \in \nabla_K$ are separable over assignment variables $\zz_p$. Therefore, inner iteration $n$ corresponds to minimizing the auxiliary function $\mathcal{A}^{i,n}$ by minimizing independently each term in the sum over $\zz_p$, subject to the simplex constraint, while guaranteeing convergence: 
        \begin{equation}
            \label{AUX-Form-each-variable}
            \min_{\zz_p \in \nabla_K} \zz_p^t (\log (\zz_p) - {\abf}_p^i - \lambda{\bb}_p^{i,n}), \, \forall p
        \end{equation}
        Note that, for each $p$, negative entropy $\zz_p^t \log \zz_p$ restricts $\zz_p$ to be non-negative, which removes the need for handling explicitly constraints $\zz_p \geq 0$. This term is convex and, therefore, the problem in \eqref{AUX-Form-each-variable} is convex: 
        The objective is convex (sum of linear and convex functions) and constraint $\zz_p \in \nabla_K$ is affine. Hence, one can minimize this constrained convex problem for each $p$ by solving the Karush-Kuhn-Tucker (KKT) conditions\footnote{Note that strong duality holds since the objectives are convex and the simplex constraints are affine. This means that the solutions of the (KKT) conditions minimize the auxiliary function.}. The KKT conditions yield a closed-form solution for both primal variables $\zz_p$ and the dual variables (Lagrange multipliers) corresponding to simplex constraints ${\mathbf 1}^t \zz_p = 1$.
        Each closed-form update, which globally optimizes \eqref{AUX-Form-each-variable} and is within the simplex, is given by:
        \begin{equation}
            \label{eq:s_inner_update}
            \zz_{p}^{i, n+1} = \frac{\exp ({\abf}_p^i+ \lambda{\bb}_p^{i, n}) }{{\mathbf 1}^t \exp ({\abf}_p^i+ \lambda{\bb}_p^{i, n})} \, \, \forall \, p 
        \end{equation}
        
        Inner updates \eqref{eq:s_inner_update} are repeated for each point $p$ until convergence. Letting $\zz_{p}^{i, *}$ denote the stationary solution obtained for 
        point $p$ at convergence, the outer update then reads:
        \begin{align}
            \label{eq:prot_mode_update}
            \zz^i_p \leftarrow \zz^{i, *}_p
        \end{align}
        The complexity of each $\ZZ$ inner iteration is $\mathcal{O}(N \rho K)$, with $\rho$ the neighborhood size for the affinity matrix. Typically, we use sparse matrices ($\rho <<N$). Note that the complexity becomes $\mathcal{O}(N^2K)$ for dense matrices, i.e., the case where all the affinities are non-zero. However, the update of each $\zz_p$ can be done for 
        point $p$ independently of the other points, which enables parallel implementations.  
        \begin{algorithm}
              \DontPrintSemicolon
              \SetAlgoLined
            \SetKwInOut{Input}{Input}\SetKwInOut{Output}{Output}
            \Input{$\XX$, Initial prototypes $\MM^0$}
            \BlankLine
            \Output{$\ZZ$ and $\MM$ }
            \BlankLine
            \BlankLine
            $i\leftarrow 1$\tcp*[l]{outer iteration index}
            $\MM^i\leftarrow \MM^0$\;
            \Repeat{convergence}{
            \BlankLine
            \tcp{$\ZZ$-updates}
              \ForEach{$\mathbf{x}_p$}{
                    Compute $\mathbf{a}^i_p$ from \eqref{general-second}\;
                    Initialize $\zz^{i,0}_{p} \leftarrow \frac{\exp\{\mathbf{a}^i_p\}}{\mathbf{1}^t\exp\{\mathbf{a}^i_p\}}$\;
                    Obtain $\zz^{i,*}_{p}$ by repeating \eqref{eq:s_inner_update} until convergence\;
                    Update $\zz^i_{p}$ using \eqref{eq:prot_mode_update} \;
             }
            \BlankLine
            \tcp{$\MM$-updates}
            \ForEach{$k$}{
                	\uIf{SLK-MS}{
                	    Obtain $\mm^{i,*}_k$ by repeating \eqref{eq:fixed_point_iterates} until convergence\;
                        Update $\mm^i_k$ using \eqref{eq:prot_mode_update};
                    }
                    \uIf{SLK-Means}{
                            Update $\mm^i_k$ using \eqref{eq:prot_mean_update}; 
                	}
                }
            $\MM^{i+1} \leftarrow [\mm^i_1; \dots; \mm^i_K]$ \;
            $i\leftarrow i+1$
            }
            $\ZZ \leftarrow [\zz^i_1; \dots; \zz^i_N]$ \;
            $\MM \leftarrow \MM^i$ \;
              \KwRet $\ZZ$, $\MM$
            \caption{SLK algorithm}
            \label{algo:slk}
        \end{algorithm}

    \subsection{Prototype updates}
        \label{sec:mode}
        
        In this section, we give the update rules for prototype matrix $\MM$. Therefore, we consider $\ZZ=\ZZ^i$ fixed to its value at outer iteration $i$. These updates come in two different forms, one is closed-form and the other iterative (based on convergent fixed-point iterations), depending on the chosen prototype-based term $\cal F(\ZZ, \MM)$. In what follows, we detail the prototype updates for the K-modes and K-means objectives.
        
        \subsubsection{SLK-MS}
        
        In this case, the goal is to find $\mm^i_k$ that minimizes the K-modes objective, with the assignment variables fixed, i.e.,
        \begin{align}
            \label{eq:rbf_objective}
            \mm^i_k = \argmin_{\mm} - \sum_p \sum_k s^i_{p, k} w_{\mathcal{F}}(\xx_p, \mm)
        \end{align}
        For the K-modes term, we use the well-known RBF kernel, defined as follows:
        \begin{align}\label{eq:rbf_kernel}
            w_{\mathcal{F}}(\xx_p, \mm) = \exp \left (-\frac{\norm{\xx_p - \mm}^2}{2\sigma^2} \right )
        \end{align}
        with $\sigma > 0$ denoting a kernel width parameter. Setting the gradient of Eq. (\ref{eq:rbf_objective}) w.r.t $\mm$ to ${\mathbf 0}$ yields the following 
        condition for a stationary point of the K-modes objective in \eqref{eq:rbf_objective}:
        \begin{align}
            \label{eq:zero_eq}& \mm - g^i_{k}(\mm) = {\mathbf 0} \\
            \text{with}~&g^i_{k} (\mm) = \frac{\sum_p s^i_{p,k}w_{\mathcal{F}}(\xx_p, \mm) \xx_p}{\sum_p s^i_{p,k}w_{\mathcal{F}}(\xx_p, \mm)}
        \end{align}
        The following proposition describes how to find the unique solution of Eq. (\ref{eq:zero_eq}), via convergent fixed-point iterates:
        \begin{Prop}{ \label{prop:prototype_update}
            For every outer iteration $i$, and every class $k$, the following fixed-point updates:
            \begin{align}
                \label{eq:fixed_point_iterates}
                \mm_k^{i, n+1} = g^i_{k} (\mm_k^{i, n})
            \end{align}
            yield a Cauchy sequence $\{ \mm_k^{i, n}\}_{n \in \mathbb N}$. As such, this sequence converges to a unique value $\mm_k^{i, *} = \lim_{n \rightarrow \infty} \mm_k^{i, n}$. Furthermore, $\mm_k^{i, *}$ is the unique solution of the stationary-point condition in Eq. (\ref{eq:zero_eq})}.
        \end{Prop}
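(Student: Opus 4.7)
The plan is to identify the fixed-point iteration as a Majorization-Minimization (MM) scheme for the K-modes objective $\phi(\mm) := -\sum_p s_{p,k}^{i} w_{\mathcal{F}}(\xx_p,\mm)$ with the support assignments held fixed, and then use the monotonicity it delivers to squeeze out a Cauchy estimate on the iterates themselves. The surrogate I would use comes from convexity of $t\mapsto e^{-t}$, applied to $t = \|\xx_p-\mm\|^2/(2\sigma^2)$: for any reference $\tilde\mm$,
\begin{equation*}
Q(\mm\,|\,\tilde\mm) \;=\; -\sum_p s_{p,k}^{i}\,w_{\mathcal{F}}(\xx_p,\tilde\mm)\Bigl[\,1 + \tfrac{\|\xx_p-\tilde\mm\|^2 - \|\xx_p-\mm\|^2}{2\sigma^2}\Bigr]
\end{equation*}
is an upper bound on $\phi(\mm)$ that is tight at $\tilde\mm$. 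Minimizing $Q(\cdot\,|\,\mm_k^{i,n})$ is a weighted least-squares problem whose closed-form optimum is exactly $g_k^{i}(\mm_k^{i,n}) = \mm_k^{i,n+1}$. This identifies the fixed-point updates with the bound optimizer of $\phi$.

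From the MM structure I get both monotone decrease, $\phi(\mm_k^{i,n+1}) \leq \phi(\mm_k^{i,n})$, and a quantitative descent bound by applying the standard identity for weighted means ($\sum_p \alpha_p\|\xx_p-\mm\|^2 = \sum_p \alpha_p\|\xx_p-\bar\mm\|^2 + (\sum_p\alpha_p)\|\mm-\bar\mm\|^2$) to the surrogate gap:
\begin{equation*}
\phi(\mm_k^{i,n}) - \phi(\mm_k^{i,n+1}) \;\geq\; \tfrac{C_k^{i,n}}{2\sigma^{2}}\,\|\mm_k^{i,n+1}-\mm_k^{i,n}\|^{2},
\end{equation*}
where $C_k^{i,n}=\sum_p s_{p,k}^{i}\,w_{\mathcal{F}}(\xx_p,\mm_k^{i,n}) = -\phi(\mm_k^{i,n})$. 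The iterates lie in the convex hull of the cluster's active features (a compact set), so $w_{\mathcal{F}}$ is uniformly bounded away from zero there and $C_k^{i,n}\geq C_0>0$ for all $n$. Since $\phi$ is bounded below, the monotone sequence $\{\phi(\mm_k^{i,n})\}$ converges, its differences are summable, and the descent bound then yields $\sum_n \|\mm_k^{i,n+1}-\mm_k^{i,n}\|^{2} < \infty$; in particular $\|\mm_k^{i,n+1}-\mm_k^{i,n}\|\to 0$.

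For the Cauchy and uniqueness conclusions I would pass to a local contraction argument at accumulation points. By compactness the sequence admits some accumulation point $\mm^{*}$, and by continuity of $g_k^{i}$ together with the step-size going to zero, any such $\mm^{*}$ satisfies the stationarity equation $\mm^{*} - g_k^{i}(\mm^{*})=\mathbf{0}$. A direct calculation gives the Jacobian
\begin{equation*}
\nabla g_k^{i}(\mm) \;=\; \tfrac{1}{\sigma^{2}}\,\Sigma_k^{i}(\mm), \qquad \Sigma_k^{i}(\mm) = \sum_p \alpha_p(\mm)\,(\xx_p-g_k^{i}(\mm))(\xx_p-g_k^{i}(\mm))^{t},
\end{equation*}
i.e.\ $\sigma^{-2}$ times the weighted covariance of the features around the current point. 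At a local minimum of $\phi$, the Hessian of $\phi$ is PSD, which (through the same covariance expression) forces $\sigma^{-2}\Sigma_k^{i}(\mm^{*})$ to have spectral radius strictly below one. Hence $g_k^{i}$ is a strict contraction in a neighborhood of $\mm^{*}$; since the iterates eventually enter that neighborhood (because their successive differences vanish and $\mm^{*}$ is an accumulation point), Banach's theorem applied to $g_k^{i}$ locally gives the geometric Cauchy estimate, uniqueness of the accumulation point, and uniqueness of the fixed point of $g_k^{i}$ in that basin, matching the claim $\mm_k^{i,\ast}=\lim_n \mm_k^{i,n}$.

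The main obstacle is the last paragraph: the rigorous passage from a single accumulation point to a strict-contraction basin. One needs to rule out stationary points that are only saddles (where the spectral radius could equal one) and to control where the iterates land. I would handle this by making explicit the standing assumption that the initialization $\mm_k^{i,0}$ lies in the basin of a strict local minimum of $\phi$ on the compact hull (which is how Gaussian mean-shift style proofs such as Carreira-Perpi\~n\'an's are typically stated), and then invoking the local contraction estimate above to pin down a unique limit; the monotone decrease of $\phi$ together with the descent inequality ensures that once inside the basin the iterates cannot escape it.
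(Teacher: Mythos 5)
Your first two paragraphs are, in substance, the paper's own argument: the paper also exploits the convexity of $h(x)=e^{-x}$ to get the first-order bound $h(a)-h(b)\geq h(b)(b-a)$, which after expanding the weighted means yields exactly your descent inequality $\Delta^n \geq \frac{1}{\sigma^2}u^n\|\mm_k^{i,n}-\mm_k^{i,n+1}\|^2$ with $u^n=\sum_p s_{p,k}w_{\mathcal{F}}(\xx_p,\mm_k^{i,n})=-\phi(\mm_k^{i,n})$; your MM phrasing and the paper's ``$u^n$ is bounded and strictly increasing, hence convergent'' are the same computation. The divergence is in how each argument passes from summability of the squared steps to the Cauchy property. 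The paper telescopes the descent inequality and invokes ``the triangle inequality'' to claim $\sum_{n=n_0}^{n_0+m}\|\mm_k^{i,n}-\mm_k^{i,n+1}\|^2 \geq \|\mm_k^{i,n_0+m}-\mm_k^{i,n_0}\|^2$; note that this inequality is backwards (a sum of squared increments does not dominate the squared total displacement --- Cauchy--Schwarz gives the reverse bound with a factor of $m$), so the paper's route to the Cauchy conclusion is itself not airtight. You instead go through a local contraction argument via the Jacobian $\sigma^{-2}\Sigma_k^i(\mm)$ and Banach's theorem, which is sound at a nondegenerate local minimum and buys a geometric rate, but at the cost of the standing assumption you flag: that the initialization lies in the basin of a strict local minimum and that saddles (spectral radius equal to one) are excluded. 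That assumption is a genuine addition not present in the proposition, so your proof as written establishes a weaker, conditional statement.

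Two further points of mismatch with the claim as stated. First, your argument yields uniqueness of the fixed point only within the contraction basin, whereas the proposition asserts that $\mm_k^{i,*}$ is \emph{the} unique solution of the stationary-point condition; for a Gaussian mean-shift map the equation $\mm=g_k^i(\mm)$ generically has one solution per mode of the weighted kernel density, so global uniqueness should not be claimed from a local contraction (nor does the paper's proof actually establish it). Second, your quantitative lower bound $C_k^{i,n}\geq C_0>0$ needs no compact-hull argument: monotone increase of $u^n$ gives $C_k^{i,n}\geq u^0>0$ directly, which is how the paper bounds the coefficient. If you want an unconditional proof of the Cauchy property matching the proposition's strength, neither your basin assumption nor the paper's telescoping step suffices as written; the honest statement obtainable from the shared first half is that $\|\mm_k^{i,n+1}-\mm_k^{i,n}\|\to 0$ and every accumulation point is a fixed point of $g_k^i$.
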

        The proof of Proposition \ref{prop:prototype_update} is provided in Appendix B. K-modes prototypes' updates therefore require inner loop iterations given by Proposition \ref{prop:prototype_update}. Once convergence is reached, the prototypes can be updated as:
        \begin{align}
            \label{eq:prot_mode_update}
            \mm^i_k \leftarrow \mm_k^{i, *}
        \end{align}
        
        \subsubsection{SLK-Means}
        
        In this case, for each class $k$, we compute $\mm_k$ such that:
        \begin{align}
            \mm^i_k = \argmin_{\mm} \sum_p \sum_k s^i_{p, k} \norm{\xx_p - \mm}^2
        \end{align}
        This problem is convex and can be solved in closed-form, yielding the class-wise means as prototypes, as in the standard K-means procedure:
        
        \begin{align}
            \label{eq:prot_mean_update}
            \mm^i_k \leftarrow \frac{\XX^t {\mathbf S}^i_k}{\one^t {\mathbf S}^i_k}
        \end{align}
        where ${\mathbf S}_k = [s^i_{1,k}, \dots, s^i_{N,k}]^t \in\{0,1\}^N$ and $\XX$ is the $N \times d$ matrix whose rows are formed by feature inputs $\xx_p$.
        
        The pseudo-code for our full Scalable Laplacian K-modes (SLK) method is provided in Algorithm \ref{algo:slk}.

\section{Convergence study}

    Putting together the $\ZZ$- and $\MM$-update steps, one could notice that sequence $\{\mathcal{R}(\ZZ^i, \MM^i)\}_{i \in \mathbb N}$ is non-increasing:
    \begin{align}
        \mathcal{R}(\ZZ^{i+1}, \MM^{i+1}) \underbrace{\leq}_{\MM\text{-update}} \mathcal{R}(\ZZ^{i+1}, \MM^{i}) \underbrace{\leq}_{\ZZ\text{-update}} \mathcal{R}(\ZZ^{i}, \MM^{i})
    \end{align}
    Furthermore, this sequence is bounded from below:
    \begin{align}
        \mathcal{R}(\ZZ, \MM) = \underbrace{\mathcal{F}(\ZZ, \MM)}_{\resizebox{0.3\hsize}{!}{$\begin{cases} \geq 0 & \text{SLK-Means} \\ \geq -N & \text{SLK-MS} \end{cases}$}} + \frac{\lambda}{2} \underbrace{\mathcal{L}(\ZZ)}_{\geq 0} + \underbrace{\mathrm{tr}(\ZZ \log \ZZ^t)}_{-N \log K}
    \end{align}
    This makes of $\{\mathcal{R}(\ZZ^i, \MM^i)\}_{i \in \mathbb N}$ a converging sequence, regardless of initialization $\{\ZZ^0, \MM^0\}$. 
    However, without additional analysis, this does neither imply that sequence $\{\ZZ^i, \MM^i\}_{i \in \mathbb N}$ itself converges, nor does it characterize its limit points. 
    
    In what follows, we analyze our proposed SLK algorithm through the lens of Zangwill’s global convergence theory, which provides a simple but general framework to study the convergence of iterative algorithms. Note that this theory was already used to prove the convergence of the concave-convex \cite{sriperumbudur2009convergence} and EM/GEM procedures \cite{wu1983convergence}, both of which could be viewed as special cases of bound optimization. In particular, we show that all limits points of any 
    sequence $\{\ZZ^i, \MM^i\}_{i \in \mathbb N}$ produced by our algorithm are stationary points.
    
    \subsection{Background}

        In this section, we introduce the minimal set of required elements to understand Zangwill's theory, upon which our own convergence result is based. 
        We first introduce the central concept of a point-to-set map $\psi$, which maps a point $\thetab \in \Theta$ to a set of points $\psi(\thetab) \subset \Theta$. 
        Map $\psi$ could be understood as the iteration of an optimization algorithm that, from a point $\thetab^{i}$ in some space $\Theta$
        of the optimization variables, outputs a new point $\thetab^{i+1} \in \psi(\thetab^{i}) \subset \Theta$ from a set of local candidate solutions. 
        We further recall the important notion of closedness, which generalizes the concept of continuity in standard point-to-point maps to point-to-set maps:\\
        \begin{Defn}
            (Closedness)  Let us consider two converging sequences:
            \begin{align}
                \thetab^{n}  \xrightarrow[n \to \infty]{} \thetab^* \nonumber\\
                \widetilde{\thetab}^{n} \xrightarrow[n \to \infty]{} \widetilde{\thetab}^* \nonumber
            \end{align}
            Let us further assume that:
            \begin{align}\label{eq:closed_map}
                \forall n \in \mathbb N,~\widetilde{\thetab}^{n} \in \psi(\thetab^{n})   
            \end{align}
            Then, the point-to-set map $\psi$ is said to be closed at point $\thetab^*$ if relation \eqref{eq:closed_map} extends to the limit when $n \rightarrow \infty$, i.e:
            \begin{align}
                \tilde{\thetab}^* \in \psi(\thetab^*) \nonumber
            \end{align}
            The point-to-set map $\psi$ is said to be closed on the set $\Theta$ if it is closed at every point of $\Theta$.
        \end{Defn}
        
        We are now ready to enunciate Zangwill's theorem:
        \begin{theorem}(p.29 in \cite{zangwill1969nonlinear}) \label{theorem:zangwill}
            Consider a compact set $\Theta$, and let  
            $\psi: \Theta \rightarrow \mathcal{P}(\Theta)$ a point-to-set map, i.e., an algorithm generating a sequence $\{\thetab^i\}_{i \in \mathbb N}$ 
            from an initial point $\{\thetab^0\}$, via iteration $\thetab^{i+1} \in \psi(\thetab^i)$.    
            Let $\Gamma \subset \Theta$ denotes a given solution set, e.g., the set of fixed points of the algorithm: $\Gamma = \{\thetab \in \Theta \, | \, \psi(\thetab)= \thetab \}$. 
            Assume that: \vspace{0.5em}\\
            (1) $\psi(\thetab)$ is nonempty and $\psi$ is closed at $\thetab$, $\forall \thetab \in \Theta$   \vspace{0.5em}\\
            (2) There is a continuous function $\mathcal{L}: \thetab \rightarrow \mathbb R$ such that: $\forall \thetab \in \Theta$, $\quad \forall ~ \thetab' \in \psi(\thetab)$:
            \begin{itemize}
                \item $\thetab' \notin \Gamma \implies \mathcal{L}(\thetab') < \mathcal{L}(\thetab)$
                \item $\thetab' \in \Gamma \implies \mathcal{L}(\thetab') \leq \mathcal{L}(\thetab)$ 
            \end{itemize}
            Then any sequence $\{\thetab^i\}_{i \in \mathbb N}$ defined by $\thetab^{i+1} \in \psi(\thetab^i)$ has all of its limit points in $\Gamma$.
        \end{theorem}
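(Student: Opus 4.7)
The plan is to prove Theorem~\ref{theorem:zangwill} by contradiction, combining three standard ingredients: compactness of $\Theta$ to extract convergent subsequences, continuity of $\mathcal{L}$ to pass limits through function evaluations, and closedness of $\psi$ to pass limits through the point-to-set map. As a preparatory step, I would observe that assumption~(2) makes $\{\mathcal{L}(\thetab^i)\}_{i \in \mathbb N}$ a non-increasing real sequence, since both branches of the dichotomy give $\mathcal{L}(\thetab') \leq \mathcal{L}(\thetab)$. Continuity of $\mathcal{L}$ on the compact set $\Theta$ provides a uniform lower bound, so this monotone bounded sequence converges to some $\mathcal{L}^\star \in \mathbb R$.

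Next, I would fix an arbitrary limit point $\thetab^\star$ of $\{\thetab^i\}$ and extract a subsequence $\thetab^{i_n} \to \thetab^\star$. Continuity of $\mathcal{L}$ gives $\mathcal{L}(\thetab^{i_n}) \to \mathcal{L}(\thetab^\star)$, and since every subsequence of a convergent sequence shares its limit, $\mathcal{L}(\thetab^\star) = \mathcal{L}^\star$. I would then perform a second compactness extraction: from $\{\thetab^{i_n + 1}\}_n \subset \Theta$, extract a further subsequence (relabeling the index as $i_n$ for brevity) converging to some $\widetilde{\thetab}^\star \in \Theta$. Because $\thetab^{i_n+1} \in \psi(\thetab^{i_n})$ for every $n$ and $\psi$ is closed at $\thetab^\star$, the closedness definition immediately yields $\widetilde{\thetab}^\star \in \psi(\thetab^\star)$. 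A second appeal to continuity of $\mathcal{L}$ then gives $\mathcal{L}(\widetilde{\thetab}^\star) = \mathcal{L}^\star = \mathcal{L}(\thetab^\star)$.

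The contradiction then closes cleanly: if $\thetab^\star \notin \Gamma$, the strict-descent branch of assumption~(2) applied to the pair $(\thetab^\star, \widetilde{\thetab}^\star)$ would force $\mathcal{L}(\widetilde{\thetab}^\star) < \mathcal{L}(\thetab^\star)$, contradicting the equality just established. Hence $\thetab^\star \in \Gamma$, proving that every limit point of $\{\thetab^i\}_{i \in \mathbb N}$ lies in the solution set.

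The main obstacle I anticipate is the double subsequence extraction combined with the proper use of closedness: one has to ensure that $\thetab^{i_n}$ and $\thetab^{i_n+1}$ converge along a common index sequence, since the closedness definition demands that both sides of the relation $\thetab^{i_n+1} \in \psi(\thetab^{i_n})$ pass to the limit simultaneously. A secondary care point is that assumption~(2) as stated places the dichotomy on the successor $\thetab'$ rather than on the predecessor $\thetab$; this is resolved either by identifying $\Gamma$ with the fixed-point set of $\psi$ (as the statement explicitly permits, so that $\thetab^\star \notin \Gamma$ propagates to $\widetilde{\thetab}^\star \notin \Gamma$ via $\mathcal{L}(\widetilde{\thetab}^\star) = \mathcal{L}(\thetab^\star)$) or by symmetrizing the limit argument between $\thetab^\star$ and $\widetilde{\thetab}^\star$ via the monotone convergence of $\{\mathcal{L}(\thetab^i)\}$.
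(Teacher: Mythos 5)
The paper offers no proof of this statement: it is quoted verbatim as Zangwill's global convergence theorem and cited to p.~29 of \cite{zangwill1969nonlinear}, then used as a black box (via Corollary~\ref{corollary:fiorot}). So there is nothing internal to compare against; your argument is the classical proof of Zangwill's theorem, and its skeleton is correct: monotone bounded descent of $\{\mathcal{L}(\thetab^i)\}$ to a limit $\mathcal{L}^\star$, a joint (single-index) subsequence extraction so that $\thetab^{i_n} \to \thetab^\star$ and $\thetab^{i_n+1} \to \widetilde{\thetab}^\star$ simultaneously, closedness to pass $\thetab^{i_n+1} \in \psi(\thetab^{i_n})$ to the limit, and continuity to force $\mathcal{L}(\widetilde{\thetab}^\star) = \mathcal{L}(\thetab^\star)$. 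You are also right to flag the double extraction along a common index sequence as the point requiring care.

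The one genuine wrinkle is the final contradiction, and you have correctly diagnosed but not fully repaired it. As transcribed in the paper, condition (2) places the dichotomy on the successor $\thetab'$, so from $\widetilde{\thetab}^\star \in \psi(\thetab^\star)$ and $\mathcal{L}(\widetilde{\thetab}^\star) = \mathcal{L}(\thetab^\star)$ the contrapositive of the strict branch yields only $\widetilde{\thetab}^\star \in \Gamma$, not $\thetab^\star \in \Gamma$; your first proposed fix (propagating $\thetab^\star \notin \Gamma$ to $\widetilde{\thetab}^\star \notin \Gamma$ through equality of $\mathcal{L}$-values) does not follow. Your second suggestion, ``symmetrizing,'' is the right one and should be made explicit: run the identical double extraction on the predecessors, i.e.\ pass to a further subsequence with $\thetab^{i_n - 1} \to \widehat{\thetab}$ (possible for $i_n \geq 1$ by compactness), use closedness of $\psi$ at $\widehat{\thetab}$ to get $\thetab^\star \in \psi(\widehat{\thetab})$, note $\mathcal{L}(\thetab^\star) = \mathcal{L}(\widehat{\thetab}) = \mathcal{L}^\star$, and conclude $\thetab^\star \in \Gamma$ from the contrapositive of the strict-descent branch applied to the pair $(\widehat{\thetab}, \thetab^\star)$. (In Zangwill's original formulation the dichotomy is on the predecessor $\thetab$, and then your argument closes exactly as you wrote it; the discrepancy is a transcription artifact of the paper's statement, not a flaw in your strategy.)
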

        
        As noted in \cite{sriperumbudur2009convergence}, the general idea to prove the convergence of an iterative algorithm is to properly set $\Gamma$ and $\mathcal{L}$. The natural choice for $\Gamma$ is to set it as the set of fixed points of the algorithm $\Gamma = \{\thetab \in \Theta \, | \, \psi(\thetab)= \thetab \}$. $\mathcal{L}$ is the objective function that the algorithm minimizes. With that in mind, assumption (2) in \autoref{theorem:zangwill} simply ensures that, even though the algorithm has not yet reached stationary points $\Gamma$, 
        objective function $\mathcal{L}$ is strictly decreasing. \autoref{theorem:zangwill} is, unfortunately, not straightforwardly applicable to our SLK algorithm, as it considers
        a single algorithm update $\psi$, while SLK alternates between both $\ZZ$- and $\MM$-updates, which handle two completely different types of optimization variables (assignments and prototypes). A corollary proposed in \cite{fiorot1979composition} enables us to verify the assumptions on each of type of updates independently while preserving the final result:
        
        \begin{corollary}(Corollary 2. in \cite{fiorot1979composition}) \label{corollary:fiorot}
            Consider $\Theta$ a compact set, $\Gamma \subset \Theta$, $J$ a finite set of indices, a set of point-to-set maps $\{\psi_j: \Theta \rightarrow \mathcal{P}(\Theta), ~j \in J\}$, and $\mathcal{L}: \thetab \rightarrow \mathbb R$ a continuous function.  Using the sequence of indices $I: \mathbb N \rightarrow J$, assume that for any $n \in \mathbb N$, for any $\thetab \in \Theta \setminus \Gamma$: \vspace{0.5em}\\
            (1) $\psi_{I(n)}(\thetab)$ is nonempty and $\psi_{I(n)}$ is closed at $\thetab$ \\
            (2) $\thetab' \in \psi_{I(n)}(\thetab) \implies \mathcal{L}(\thetab') < \mathcal{L}(\thetab)$ \vspace{0.5em} \\
            Then any sequence $\{\thetab^n\}_{n \in \mathbb N}$ defined by $\thetab^{n+1} \in \psi_{I(n)}(\thetab^n)$ has all of its limit points in $\Gamma$.
        \end{corollary}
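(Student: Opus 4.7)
The plan is to derive this corollary by reducing it to \autoref{theorem:zangwill}. The key idea is to collapse the finite family of maps $\{\psi_j\}_{j \in J}$ into a single point-to-set map that, at each $\thetab$, contains all candidate transitions the alternating algorithm could take. Concretely, I would introduce the composite map $\Psi: \Theta \rightarrow \mathcal{P}(\Theta)$ defined by
\begin{equation}
\Psi(\thetab) = \bigcup_{j \in J} \psi_j(\thetab).
\end{equation}
Any trajectory $\{\thetab^n\}_{n \in \mathbb N}$ produced by the alternating rule $\thetab^{n+1} \in \psi_{I(n)}(\thetab^n)$ automatically satisfies $\thetab^{n+1} \in \Psi(\thetab^n)$, so it is a valid trajectory for $\Psi$. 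It then suffices to verify Zangwill's two hypotheses for $\Psi$ at every $\thetab \in \Theta \setminus \Gamma$. Nonemptiness of $\Psi(\thetab)$ is immediate from assumption (1), since each $\psi_j(\thetab)$ is already nonempty. The strict-descent condition transfers directly: any $\thetab' \in \Psi(\thetab)$ lies in some $\psi_j(\thetab)$, and assumption (2) gives $\mathcal{L}(\thetab') < \mathcal{L}(\thetab)$.

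The one nontrivial step is to show that the union $\Psi$ inherits closedness at every $\thetab^* \in \Theta \setminus \Gamma$, which is the regime Zangwill's argument actually uses. Given converging sequences $\thetab^n \rightarrow \thetab^*$ and $\widetilde{\thetab}^n \rightarrow \widetilde{\thetab}^*$ with $\widetilde{\thetab}^n \in \Psi(\thetab^n)$, each $\widetilde{\thetab}^n$ lies in $\psi_{j_n}(\thetab^n)$ for some $j_n \in J$. Since $J$ is finite, the pigeonhole principle yields a fixed $j^* \in J$ and an infinite subsequence along which $j_n = j^*$. Applying closedness of $\psi_{j^*}$ at $\thetab^*$ (guaranteed by assumption (1), since $\thetab^* \notin \Gamma$) to this subsequence gives $\widetilde{\thetab}^* \in \psi_{j^*}(\thetab^*) \subset \Psi(\thetab^*)$, establishing closedness of $\Psi$ on $\Theta \setminus \Gamma$.

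With both hypotheses of Zangwill verified, a direct application of \autoref{theorem:zangwill} to $\Psi$ delivers the conclusion: all limit points of any sequence generated by the alternating algorithm lie in $\Gamma$. The main obstacle, and the only place where any real work is required, is the closedness verification for $\Psi$. It relies crucially on $J$ being finite, since without finiteness the pigeonhole step breaks down, and a point in the closure of the graph of an infinite union of $\psi_j$'s need not lie in any single one of the $\psi_j(\thetab^*)$. A second, more subtle point worth noting in the write-up is that the alternating hypothesis only provides closedness on $\Theta \setminus \Gamma$; this is consistent with Zangwill's proof, where closedness is only exploited at the candidate limit point, which is assumed outside $\Gamma$ in the contradiction argument.
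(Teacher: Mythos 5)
First, a point of reference: the paper does not prove this corollary at all --- it is imported verbatim from \cite{fiorot1979composition} --- so there is no internal proof to compare against, and your derivation from \autoref{theorem:zangwill} is a genuine addition rather than a variant of the paper's argument. The core of your construction is the right one and is the standard device for unions of point-to-set maps: the composite map $\Psi(\thetab)=\bigcup_{j}\psi_j(\thetab)$ absorbs every alternating trajectory, and your pigeonhole argument for the closedness of $\Psi$ on $\Theta\setminus\Gamma$ is correct and correctly isolates the finiteness of $J$ as the crucial ingredient. (A cosmetic repair: take the union only over the range of $I$, since assumptions (1)--(2) say nothing about maps $\psi_j$ that the index sequence never selects, so closedness and descent cannot be transferred for those.)

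There is, however, a genuine gap in the reduction. Zangwill's theorem does not only use strict descent off $\Gamma$; its condition (2) also demands $\mathcal{L}(\thetab')\leq\mathcal{L}(\thetab)$ at solution points, and this second bullet is what makes $\{\mathcal{L}(\thetab^n)\}$ monotone along the \emph{entire} trajectory --- monotonicity is precisely what forces $\mathcal{L}(\thetab^{n_k})$ and $\mathcal{L}(\thetab^{n_k+1})$ to share the same limit in the contradiction argument. The corollary's hypotheses, and your verification, say nothing about what $\psi_{I(n)}$ does on $\Gamma$, so monotonicity is lost as soon as the trajectory touches $\Gamma$. This is not a pedantic omission: with the hypotheses exactly as stated one can take $\Theta=[0,1]$, $\Gamma=\{0,\tfrac{1}{2}\}$, a single map with $\psi(\thetab)=\{\thetab/2\}$ off $\Gamma$ and $\psi(\tfrac{1}{2})=\{1\}$, and $\mathcal{L}(\thetab)=\thetab$; all stated assumptions hold on $\Theta\setminus\Gamma$, yet the trajectory started at $1$ oscillates between $1$ and $\tfrac{1}{2}$ and has the limit point $1\notin\Gamma$. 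You therefore need either to add the non-increase condition on $\Gamma$ as an explicit hypothesis (it is present in Zangwill's own statement, and it holds for free in the paper's application, where $\psi_{\ZZ}$ and $\psi_{\MM}$ are argmin maps of a tight bound on $\mathcal{R}$ and of $\mathcal{R}$ itself), or to restrict the claim to trajectories along which $\mathcal{L}$ is non-increasing. Your closing remark that closedness is only exploited off $\Gamma$ is correct; the same scrutiny applied to the descent condition is exactly where the argument currently breaks.
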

        
        \subsection{Main result}
            In this section, we derive our convergence result for SLK-Means, but the result and proof are very similar for SLK-modes. We frame an inner iteration of the $\ZZ$-update defined by \eqref{eq:s_inner_update} and the whole $\MM$-update in \eqref{eq:prot_mean_update} as two distinct point-to-set maps:
            \begin{align}
                \psi_{\ZZ}(\ZZ, \MM) &= \argmin_{\ZZ} \mathcal{A}(\ZZ, \MM) \times \{\MM\} \\
                \psi_{\MM}(\ZZ, \MM) &= \{\ZZ\} \times \argmin_{\MM} \mathcal{R}(\ZZ, \MM)
            \end{align}
            $\psi_{\MM}$ and $\psi_{\ZZ}$ are the building blocks of our SLK method: a full $\ZZ$-update simply consists in sequentially applying $\psi_{\ZZ}$ until inner convergence. On the other hand, the $\MM$-update is captured by a single application of $\psi_{\MM}$.
            Finally, we define $\Gamma$ as the intersection of the sets of fixed points of $\psi_\ZZ$ and $\psi_\MM$:
            \begin{align}
                \Gamma = \{(\ZZ, \MM) ~:~ (\ZZ,\MM) = \psi_\ZZ(\ZZ, \MM) = \psi_\MM(\ZZ, \MM) \}
            \end{align}
            
            \begin{Prop}
                Starting from some initial point $\{\ZZ^{0}, \MM^{0}\}$ such that $\mathcal{R}(\ZZ^{0}, \MM^{0}) < \infty$, every limit point of the sequence $\{\ZZ^{i}, \MM^{i}\}$ built using the SLK-Means \autoref{algo:slk} is in $\Gamma$.
            \end{Prop}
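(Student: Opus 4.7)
The plan is to invoke \autoref{corollary:fiorot} with $\mathcal{L} = \mathcal{R}$ and the compact set $\Theta = \nabla_K^N \times \mathcal{C}^K$, where $\mathcal{C} \subset \mathbb{R}^d$ is the convex hull of the data $\{\xx_p\}_{p=1}^N$. Because every assignment lives in the simplex and every K-Means prototype update \eqref{eq:prot_mean_update} is a convex combination of the $\xx_p$'s, all iterates produced by Algorithm \ref{algo:slk} remain in this compact set. With the standard convention $0 \log 0 = 0$, the objective $\mathcal{R}$ extends continuously to $\Theta$, discharging the continuity requirement on $\mathcal{L}$.

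I would then verify the hypotheses for $\psi_\ZZ$. The auxiliary function \eqref{Aux-function} is separable across points, and for each $p$ its per-point contribution $\zz_p^t \log \zz_p - \zz_p^t (\abf_p^i + \lambda \bb_p^{i,n})$ is strictly convex on $\nabla_K$ (negative entropy is strictly convex, the remainder is linear). Consequently $\mathcal{A}(\,\cdot\,;\ZZ,\MM)$ has a unique simplex-constrained minimizer, given in closed form by \eqref{eq:s_inner_update} and depending continuously on $(\ZZ,\MM)$. Hence $\psi_\ZZ$ is a single-valued continuous point-to-set map, and is therefore closed and nonempty. For strict decrease, combining the tightness and upper-bound properties in \eqref{Eq:Auxiliary_function_conditions} with the strict convexity of $\mathcal{A}$ yields, whenever $\ZZ' = \psi_\ZZ(\ZZ, \MM) \neq \ZZ$,
\begin{align*}
\mathcal{R}(\ZZ', \MM) \leq \mathcal{A}(\ZZ'; \ZZ, \MM) < \mathcal{A}(\ZZ; \ZZ, \MM) = \mathcal{R}(\ZZ, \MM).
\end{align*}

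For $\psi_\MM$ in the K-Means case, the only $\MM$-dependent term of $\mathcal{R}$ is $\sum_p \sum_k s_{p,k} \norm{\xx_p - \mm_k}^2$, which is strictly convex in each $\mm_k$ attached to a non-empty cluster. Its unique minimizer is the closed-form mean \eqref{eq:prot_mean_update}, continuous in $\ZZ$ on the open region where no cluster is empty. So $\psi_\MM$ is likewise single-valued, closed, nonempty, and strictly decreases $\mathcal{R}$ off its fixed points. At any $(\ZZ, \MM) \notin \Gamma$, whichever of the two maps is applied next by the algorithm must actually move the iterate (otherwise $(\ZZ, \MM)$ would be a joint fixed point and hence in $\Gamma$), so strict decrease holds. \autoref{corollary:fiorot} then delivers the conclusion that every limit point of the generated sequence lies in $\Gamma$.

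The step I expect to be most delicate is aligning the adaptive scheduling of Algorithm \ref{algo:slk} — which runs $\psi_\ZZ$ until inner convergence before applying a single $\psi_\MM$ — with the prescribed index sequence $I(n)$ in \autoref{corollary:fiorot}. The key observation is that $\psi_\ZZ$ is only re-scheduled while it genuinely changes the iterate, so the strict-decrease condition is never vacuously invoked at a $\psi_\ZZ$-fixed point; and once such a fixed point is reached, a $\psi_\MM$ step is applied, which strictly decreases $\mathcal{R}$ unless we are also at an $\MM$-fixed point, i.e.\ in $\Gamma$. A secondary obstacle is handling the empty-cluster degeneracy, which can be excluded by a mild non-degeneracy assumption on the initialization or a standard tie-breaking rule; with this caveat, $\psi_\MM$ is continuous on the relevant domain and the full argument goes through.
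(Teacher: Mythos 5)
Your proof follows essentially the same route as the paper's: both invoke the Fiorot corollary with the same two point-to-set maps $\psi_\ZZ$ and $\psi_\MM$, establish compactness of the iterate space, derive closedness from the (single-valued, continuous) argmin structure of each update, and obtain strict decrease off $\Gamma$ from the strict convexity of $\mathcal{A}$ in $\ZZ$ and of the K-means term in $\MM$. The differences are cosmetic rather than structural --- you take $\Theta = \nabla_K^N \times \mathcal{C}^K$ with $\mathcal{C}$ the convex hull of the data where the paper uses a sublevel set of $\mathcal{E}(\ZZ^0,\cdot)$, and you are more explicit than the paper about the empty-cluster degeneracy and about aligning the adaptive schedule with the index sequence $I(n)$, two points the paper's proof glosses over.
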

            
            \begin{proof} 
                The idea of the proof is to apply Corollary \ref{corollary:fiorot} with the right ingredients. First, we define the set of point-to-set maps as $\{\psi_{\MM}, \psi_{\ZZ}\}$, and the index function $I$ as the one given by the SLK method. We now define our optimization-variable space, and show its compactness. Given that the objective keeps decreasing, $(\ZZ^{i}, \MM^{i}),~i \in \mathbb N$ has to live in the following space:
                \begin{align}
                    \Theta =  L_0 \times \Delta_K
                \end{align}
                where $L_0=\{\MM \in \mathbb R^{K\times d} \, | \, \mathcal{E}(\ZZ^0, \MM) \leq \mathcal{E}(\ZZ^0, \MM^0)\}$ represents a sublevel set of $\mathcal{E}(\ZZ^0, .)$. \\

                \textbf{Compactness of $\Theta$:} Provided the continuity of $\mathcal{E}$, $L_0$ is closed as the inverse image of a continuous function. Furthermore, it is bounded as $\mathcal{E}(\ZZ^0, \MM) \xrightarrow[\norm{\MM} \to \infty]{} \infty$. Therefore, $L_0$ is compact. Furthermore, simplex $\Delta_K$ is a compact subset of $\mathbb{R}^K$. Hence, as the product of two compact sets, $\Theta$ is compact. \\
                    
                \textbf{Assumption (1): } To prove the closedness assumption (1), we will use the following lemma from \cite{gunawardana2005convergence}:
                    \begin{lemma} \label{lemma:closedness} (Proposition 7 of Appendix A in \cite{gunawardana2005convergence})
                        For a continuous function $\phi: A \times B \rightarrow B$ defined as
                        \begin{align}
                            \psi(\abf) = \argmin_{\bb \in B} \phi(\abf, \bb)
                        \end{align}
                        Then $\psi$ is closed at $\abf$ if $\psi(\abf)$ is nonempty.
                    \end{lemma}
                    Using the continuity of both $\mathcal{A}$ and $\mathcal{R}$ in both $\ZZ$ and $\MM$, along with Lemma \ref{lemma:closedness}, the closedness of 
                    $\psi_\ZZ$ and $\psi_\MM$ follows. \\
                    
                \textbf{Assumption (2):} We now prove that the objective function strictly decreases for non-stationnary points. The argument relies on the strict convexity of $\mathcal{A}(\ZZ, \MM)$ in $\ZZ$ and of $\mathcal{R}(\ZZ, \MM)$ in $\ZZ$ and $\MM$. Specifically, consider $n \in \mathbb N$ and $\{\ZZ, \MM\} \notin \Gamma$; $\exists~\psi \in \{\psi_{\ZZ}, \psi_{\MM}\} \, | \, \{\ZZ, \MM\} \neq \psi(\ZZ, \MM)$. SLK pre-defined sequence ensures that such $\psi$ is chosen at iteration $n$. Without loss of generality, assume $\psi=\psi_{I(n)}=\psi_{\ZZ}$. Then, recall that $\mathcal{A}(\ZZ, \MM)$ is strictly convex in $\ZZ$, which implies that it has a unique minimum. Therefore, either the current solution $\ZZ^i$ is already a stationnary point, with $\ZZ^n = \argmin_\ZZ \mathcal{A}(\ZZ^n, \MM^n)$, which implies $\{\ZZ^n, \MM^n\} \in \Gamma$ (impossible), or the current solution is not a stationary point. In this case, for $\ZZ^{n+1} \in \psi_{\ZZ}(\ZZ^n, \MM^n)$, we inevitably have $\mathcal{A}(\ZZ^{n+1}, \MM^n) < \mathcal{A}(\ZZ^n, \MM^n)$.
                        
            \end{proof}

\begin{table*}[t]
\caption{Average accuracy (in \%) for the few-shot learning experiments on the \textit{mini}ImageNet, \textit{tiered}ImageNet and CUB dataset. The best results are highlighted in bold font. The shaded areas are the results based on clustering approaches for transductive few-shot inference.}
\label{tab:mini-tiered}
\begin{center}
\begin{tabular}{llcccccccc}
\toprule
& & & &\multicolumn{2}{c}{\textbf{\textit{mini}ImageNet}}&\multicolumn{2}{c}{\textbf{\textit{tiered}ImageNet}}&\multicolumn{2}{c}{\textbf{CUB}} \\
\textbf{Methods} & \textbf{Reference} & \textbf{Transductive} & \textbf{Network}&\textbf{1-shot}& \textbf{5-shot}& \textbf{1-shot}& \textbf{5-shot}& \textbf{1-shot}& \textbf{5-shot}\\ 
\midrule
Baseline & ICLR 2019\cite{chen2018a} &\multirow{10}{*}{\xmark} & ResNet-18 & 51.87  & 75.68  &-&-& 67.02&83.58\\
RelationNet & CVPR 2018 \cite{sung2018learning} & & ResNet-18 & 52.48  & 69.83  &-&-&68.58 &84.05\\
MatchingNet & NeurIPS 2016 \cite{Vinyals2016MatchingNF} & & ResNet-18 & 52.91  & 68.88  &-&-&73.49 &84.45\\
ProtoNet & NeurIPS 2017 \cite{snell2017prototypical} & & ResNet-18 & 54.16  & 73.68  &-&-& 72.99 & 86.64\\
MTL & CVPR 2019 \cite{sun2019mtl}  & & ResNet-12 & 61.20  & 75.50  &-&-&-&-\\
vFSL & ICCV 2019 \cite{Variationalfewshot}  & & ResNet-12 & 61.23  & 77.69  &-&-&-&-\\
MetaoptNet& CVPR 2019 \cite{lee2019meta}  & & ResNet-12 & 62.64  & 78.63  & 65.99  & 81.56& & \\
Neg-cosine & ECCV 2020 \cite{liu2020negative}& & ResNet-18 & 62.33  & 80.94  & 69.68  & 84.56&-&- \\
SimpleShot & arXiv 2019 \cite{wang2019simpleshot}& & ResNet-18 & 63.10  & 79.92  & 69.68  & 84.56& 70.28&86.37 \\
Distil  & ECCV 2020 \cite{tian2020rethinking}& & ResNet-18 & 64.82  & 82.14  & 71.52  & 86.03&-&- \\
\hline
MAML & ICML 2017 \cite{Finn2017ModelAgnosticMF} & & ResNet-18 & 49.61  & 65.72  &-&-& 68.42& 83.47 \\
RelationNet + T & NeurIPS 2019 \cite{can} & & ResNet-12 & 52.40  & 65.36  &-&-&-&-\\
ProtoNet + T & NeurIPS 2019 \cite{can} & & ResNet-12 & 55.15  & 71.12  &-&-&-&-\\
MatchingNet + T & NeurIPS 2019 \cite{can} & & ResNet-12 & 56.31  & 69.80  &-&-&-&-\\
Team & ICCV 2019 \cite{team}& & ResNet-18 & 60.07 & 75.90 & - & -&-&-\\
TPN & ICLR 2019 \cite{liu2018learning} & & ResNet-12 & 59.46 & 75.64 & - & -&-&-\\
Entropy-min & ICLR 2020 \cite{Dhillon2020A} & & ResNet-12 & 62.35  & 74.53  &-&-&-&-\\
DPGN & CVPR 2020 \cite{yang2020dpgn} & & ResNet-18 & 66.63  & 84.07  & 70.46 & 86.44&-&-\\
CAN+T & NeurIPS 2019 \cite{can}& & ResNet-12 & 67.19 & 80.64 & 73.21 & 84.93&-&-\\
LaplacianShot & ICML 2020 \cite{Laplacian}  & & ResNet-18 &70.74  &82.33 & 77.60 & 86.25&79.93 &88.59 \\
\rowcolor{Gray}
K-means & & & ResNet-18 &71.11  &81.87   &77.94  &85.99&80.30 &88.51 \\
\rowcolor{Gray}
K-modes & & & ResNet-18 &72.31 &82.21  &78.85  &86.12& 81.73&88.58 \\
\rowcolor{Gray}
\rowcolor{Gray}
SLK-MEANS &  & & ResNet-18&72.58  & \textbf{82.92}  &79.22  &86.50 &81.40 &\textbf{88.61} \\
\rowcolor{Gray}
SLK-MS &  & \multirow{-13}{*}{\cmark} & ResNet-18 &\textbf{73.10}  &82.82   & \textbf{79.99} & \textbf{86.55}&\textbf{81.88} &88.55\\
\hline
LEO & ICLR 2019 \cite{rusu2018metalearning} & \multirow{6}{*}{\xmark}& WRN & 61.76  &77.59  & 66.33  & 81.44&-&- \\
ProtoNet & NeurIPS 2017 \cite{snell2017prototypical} & & WRN & 62.60  & 79.97  &-&-&-&-\\
MatchingNet & NeurIPS 2016 \cite{Vinyals2016MatchingNF} & & WRN & 64.03  & 76.32  &-&-&-&-\\
CC+rot & ICCV 2019 \cite{gidaris2019boosting}  & & WRN & 62.93  & 79.87  & 70.53  & 84.98&-&- \\
FEAT & CVPR 2020 \cite{ye2020fewshot}  & & WRN & 65.10  & 81.11  & 70.41  & 84.38&- &- \\
SimpleShot & arXiv 2019 \cite{wang2019simpleshot}& & WRN & 65.87 & 82.09  & 70.90  & 85.76&-&- \\
\hline
AWGIM & CVPR 2020 \cite{guo2020attentive}  & & WRN & 63.12  & 78.40  & 67.69  & 82.82&-&- \\ 
Entropy-min & ICLR 2020 \cite{Dhillon2020A} & & WRN & 65.73  & 78.40  & 73.34  & 85.50&-&- \\
SIB & ICLR 2020 \cite{hu2020empirical}& & WRN & 70.0 & 79.2  & 70.90  & 85.76&-&- \\
BD-CSPN & ECCV 2020 \cite{liu2019prototype} & & WRN & 70.31  & 81.89  & 78.74  & 86.92&-&- \\
SIB+E$^3$BM & ECCV 2020 \cite{liu2020ensemble} & & WRN & 71.4  & 81.2  & 75.6  & 84.3&-&- \\
LaplacianShot & ICML 2020 \cite{Laplacian}  & & WRN &73.44  & 83.93  & 78.80 & \textbf{87.72}&-&- \\ 
\rowcolor{Gray}
K-means & & & WRN &73.80  &\textbf{84.62}   &79.78  & 87.23&-&-\\
\rowcolor{Gray}
K-modes & & & WRN &74.78  &84.45   & 80.67 & 87.23&-&-\\
\rowcolor{Gray}
\rowcolor{Gray}
SLK-MEANS &  & & WRN&74.75  &84.61   &80.55  &87.57&-&- \\
\rowcolor{Gray}
SLK-MS &  & \multirow{-10}{*}{\cmark} & WRN &\textbf{75.17}  &84.28   &\textbf{81.13}  & 87.69&-&-\\
\bottomrule
\end{tabular}
\end{center}
\end{table*}

\section{Few-shot learning experiments}

    In the following, we describe our experimental setup and report the results of various clustering approaches in the few-shot learning scenario.
    
    \subsection{Datasets}
    For this task, we resort to three widely used benchmarks for few-shot classification: \textit{mini}ImageNet, \textit{tiered}ImageNet and CUB-200-2011. The details 
    of these standard few-shot benchmarks are given below.
    
    \textbf{\textit{mini}ImageNet} is a subset of the larger ILSVRC-12 dataset \cite{ILSVRC15}. It has a total of 60,000 color images with 100 classes, where each class 
    has 600 images of size $84 \times 84$, following  \cite{Vinyals2016MatchingNF}. We use the standard split of 64 classes for base training, 16 for validation and 20 for 
    testing \cite{Ravi2017OptimizationAA,wang2019simpleshot}. 
    
    \textbf{\textit{tiered}ImageNet} \cite{ren18fewshotssl} is also a subset of the ILSVRC-12 dataset, but with 608 classes instead. Following \cite{wang2019simpleshot}, we 
    split the dataset into 351 classes for base training, 97 for validation and 160 for testing. As in the \textit{mini}ImageNet dataset, the images are 
    resized to $84 \times 84$ pixels.
    
    \textbf{CUB-200-2011} \cite{wah2011caltech} is a fine-grained image classification dataset with 200 categories. Following the setting in \cite{chen2018a} for few-shot 
    classification, we split the CUB dataset into 100 classes for base training, 50 for validation and 50 for testing. Similarly to the previous datasets, the images are 
    resized to $84 \times 84$ pixels.
    
    \subsection{Evaluation Protocol}
    We follow standard evaluation practices in few-shot learning, using 600 tasks (5-way classification) randomly sampled from the test classes. 
    The query set of each task contains 15 images per class, and the average accuracy over 600 few-shot tasks is reported, for both the 1-shot and 5-shot supervision scenarios. 
    
    \subsection{Architectures}
    We evaluate all clustering methods on two different network models as feature extractor function $\ftheta$, following the same settings used recently in \cite{Laplacian}. 
    Particularly, we consider the following standard models for the few-shot learning experiments:
    
    \textbf{ResNet-18} is based on the standard deep residual network, ResNet \cite{he2016deep}, whose first two down-sampling layers are removed, by setting the stride to 1 in the first convolutional layer and removing the first max-pooling layer. Furthermore, the kernel size of the first convolutional layer is set to $3 \times 3$, instead of $7 \times 7$ in the original model. The used ResNet-18 model has 8 basic residual blocks, and the dimension of the extracted features is 512.
    
    \textbf{WRN} \cite{WRN} widens the residual blocks by adding more convolutional layers and feature planes. In our particular case, we used an architecture with 28 convolutional layers, along with a widening factor of 10. This network yields an extracted feature vector of dimension 640.
    
    \subsection{Implementation Details}
    
        \textbf{Base training: } We utilize the publicly available trained models from \cite{Laplacian}\footnote{https://github.com/imtiazziko/LaplacianShot}. The network models are 
        trained using the standard cross-entropy loss on the training set of $A$ base classes, without resorting to meta-learning and episodic-training strategies. Training starts 
        from scratch and runs during 90 epochs. The models are optimized using SGD. Similarly to \cite{Laplacian,wang2019simpleshot}, the initial learning rate is set to 0.1 and scaled by 0.1 at the 
        45th and 66th epoch for \textit{mini}ImageNet and CUB, and after every 30 epochs for \textit{tiered}ImageNet. The mini-batch size is set to 256 for ResNet-18 and 128 for 
        WRN.
        
        \textbf{Inference procedure: }Before running any clustering approach in the few-shot learning setting, we extract the features from both the support and query samples with the initial $\ftheta$ trained on the base classes. Similarly to \cite{wang2019simpleshot,Laplacian}, we perform CL2 normalization on the features. This amounts to computing the mean of 
        the base-class features, followed by centering the extracted support and query features via subtracting the mean. Then, we perform an L2 normalization on the centered features. 
        Similarly to \cite{liu2019prototype,Laplacian}, we also perform a bias correction by computing the cross-class bias from the difference between the mean features of the support set $\XXs$ and the query set $\XXq$. Then, we add this bias to each query features vector before performing the clustering for a given few-shot task. 
        For each task, the initial prototypes $\{\mm_k^0\}_{k=1}^K$ for the clustering algorithms (see Alg. \ref{algo:slk}) are either the support examples $\xx_p \in \XXs^k$ of each class in the 1-shot setting, or the mean of the support examples within each class in the 5-shot setting. Once the clustering of a task is performed, we label each query sample according to the class label of the support examples belonging to the same cluster.
        
        \textbf{SLK hyperparameters: } We define $\mathcal{N}_p^{\rho}$ as the set of the $\rho$ nearest neighbors of data point $\xx_p$. Mode estimation is based on the Gaussian kernel as per \eqref{eq:rbf_kernel}, with $\sigma^2$ estimated as:
        \begin{align}
            \sigma^2 = \frac{1}{N\rho}\sum_{\mathbf{x}_p\in \mathbf{X}} \sum_{\mathbf{x}_q\in \mathcal{N}_p^{\rho}}\|\mathbf{x}_p-\mathbf{x}_q\|^2
        \end{align}
        The Laplacian affinities $w_\mathcal{L}$ are built using the $\rho$-nearest neighbors:
        \begin{align}
            w_{\mathcal{L}}(\xx_p,\xx_q) = \begin{cases}
            1 & \text{if } \mathbf{x}_q\in \mathcal{N}_p^{\rho} \\
            0 & \text{otherwise}
            \end{cases}
        \end{align}
        This yields a sparse affinity matrix, which is efficient in terms of memory and computational complexity.
        In our experiments, $\rho$ is simply chosen from three typical values (3, 5 or 10) tuned over training-data by sampling 500 few-shot tasks from the training set. Such procedure yielded a fixed $\rho=3$ during inference. Since no test data is involved in choosing the value of $\rho$, this procedure is not biased towards the testing classes. Then, we tune parameter 
        $\lambda$ on the validation classes by sampling $500$ few-shot tasks and evaluating a range of $\lambda$ values from $[0.1, 0.3, 0.5, 0.7, 0.8, 1.0]$. The best $\lambda$ values corresponding to the best average 1-shot and 5-shot accuracy over the 500 tasks on the validation classes are selected for inference on the test classes.
        \begin{table}
            \caption{ Average inference time (in seconds) for the 5-shot tasks. The results are for \textit{mini}ImageNet dataset with WRN network.}
            \label{tab:timing}
            \begin{center}
            \begin{small}
            \begin{tabular}{lc}
            \toprule
            \textbf{Methods} & \textbf{inference time} \\
            \midrule
            SimpleShot \cite{wang2019simpleshot}& 0.009 \\
            Transductive tuning \cite{Dhillon2020A} & 20.7 \\
            \rowcolor{Gray}
            K-means & 1.17\\
            \rowcolor{Gray}
            K-modes & 1.10\\
            \rowcolor{Gray}
            \rowcolor{Gray}
            SLK-MEANS &1.71\\
            \rowcolor{Gray}
            SLK-MS &1.15 \\
            \bottomrule
            \end{tabular}
            \end{small}
            \end{center}
        \end{table}

    \subsection{Results}
    The results on \textit{mini}Imagenet, \textit{tiered}Imagenet and CUB datasets are reported in Table \ref{tab:mini-tiered}. It is very interesting to observe that the clustering-based approaches (shaded gray in the tables) consistently outperform transductive few-shot learning methods, yielding state-of-the-art on these standard benchmarks, without resorting to complex meta-learning and episodic-training strategies. In the 1-shot settings, our SLK-MS clustering outperforms the best-performing state-of-the-art few-shot method by important 
    margins (2 to $3\%$), across data sets and models. It also outperforms several convoluted meta-learning methods by substantial margins, up to $10\%$ or more, including many very recent tranductive methods. Surprisingly, our results show that even the standard K-means clustering procedure already achieves competitive performances in 
    comparison to the state-of-the-art. The standard K-means procedure is on par with the best-performing method in the tranductive setting, and outperforms substantially several very recent 
    methods based on complex meta-learning and/or episodic training schemes. These surprising results point to the limitations of the current few-shot benchmarks, and question the viability of a large body of convoluted few-shot learning techniques in the recent literature. It also suggests rethinking transductive few-shot learning and using simple constrained clustering 
    mechanisms as baselines.
    
    \subsection{Inference time}
    Table \ref{tab:timing} reports an evaluation in terms of efficiency. This is done by measuring the average time required for inference on the 5-shot tasks in \textit{mini}ImageNet. 
    Note that all the methods listed in Table \ref{tab:timing} are transductive, except SimpleShot. The transductive network fine-tuning approach in \cite{Dhillon2020A}, for instance, 
    is significantly slower than the other methods. This is due to the gradient updates performed by \cite{Dhillon2020A} at inference over all the network parameters. 
    While the clustering methods are more efficient than the approach in \cite{Dhillon2020A}, they require longer inference times than inductive predictions \cite{wang2019simpleshot}. 

\section{Experiments on clustering}
    We conducted a comprehensive experimental analysis of our SLK approaches on 5 different data sets, each having different features. To evaluate the clustering performance, we 
    use two metrics widely adopted in the clustering community: the Normalized Mutual Information (NMI) \cite{strehl2002cluster} and the Clustering Accuracy (ACC). 
    The optimal mapping of clustering assignments to the true labels is determined by using the Kuhn-Munkres algorithm \cite{munkres1957algorithms}. 
    We further report evaluations and comparisons in terms of optimization quality, which we measure by the values obtained at convergence for the discrete-variable objective ${\cal E}$ in \eqref{eq:lkmode0}.
    \begin{table*}[t]
    \caption{Clustering results as NMI/ACC.}
    \label{tab:tab-results-imc}
    \centering
    \begin{tabular}{lccccccc}
    \toprule
    Algorithm&MNIST&MNIST (GAN)&LabelMe (Alexnet) & LabelMe (GIST)& YTF & Shuttle&Reuters\\
    \midrule
    K-means& 0.53/0.55&0.68/0.75&0.81/0.90&0.57/0.69 &0.77/0.58&0.22/0.41&0.48/0.73\\
    K-modes & 0.56/0.60&0.69/0.80&0.81/0.91&0.58/0.68 &0.79/0.62&0.33/0.47&0.48/0.72\\ 
    NCUT & 0.74/0.61&0.77/0.67&0.81/\textbf{0.91}&0.58/0.61 &0.74/0.54&\textbf{0.47}/0.46&-\\
    KK-means&0.53/0.55&0.69/0.68&0.81/0.90&0.57/0.63 &0.71/0.50& 0.26/0.40&-\\
    LK&-&-&0.81/\textbf{0.91}&0.59/0.61 &0.77/0.59 &-&-\\
    Spectralnet&-&-&-&-&-&-&0.46/0.65 \\
    SLK-MS&\textbf{0.80}/\textbf{0.79}&0.86/\textbf{0.94}&\textbf{0.83}/\textbf{0.91}&\textbf{0.61}/\textbf{0.72} &\textbf{0.82}/\textbf{0.65}&0.45/0.70&0.43/\textbf{0.74}\\
    SLK-Means&0.78/\textbf{0.75}&0.84/0.91&\textbf{0.83}/\textbf{0.91}&0.60/0.68 &\textbf{0.82}/0.64&0.31/\textbf{0.71}&\textbf{0.56}/\textbf{0.77}\\
    \bottomrule
    \end{tabular}
    \end{table*}
    \begin{figure*}
    \centering
    \subfloat[MNIST (small)]{\includegraphics[width=.27\textwidth,height=.20\textwidth]{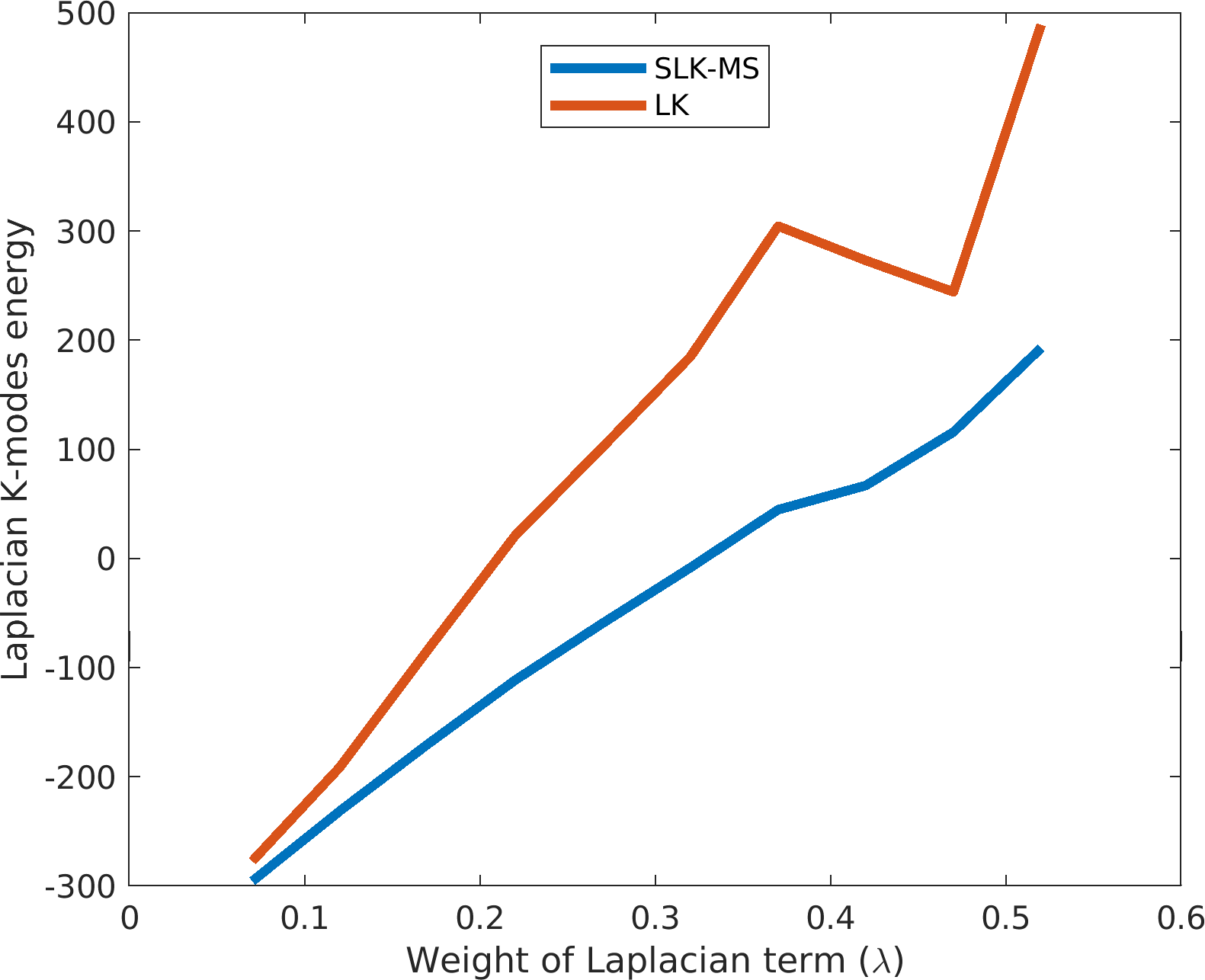}}
    \subfloat[LabelMe (Alexnet)]{\includegraphics[width=.27\textwidth,height=.20\textwidth]{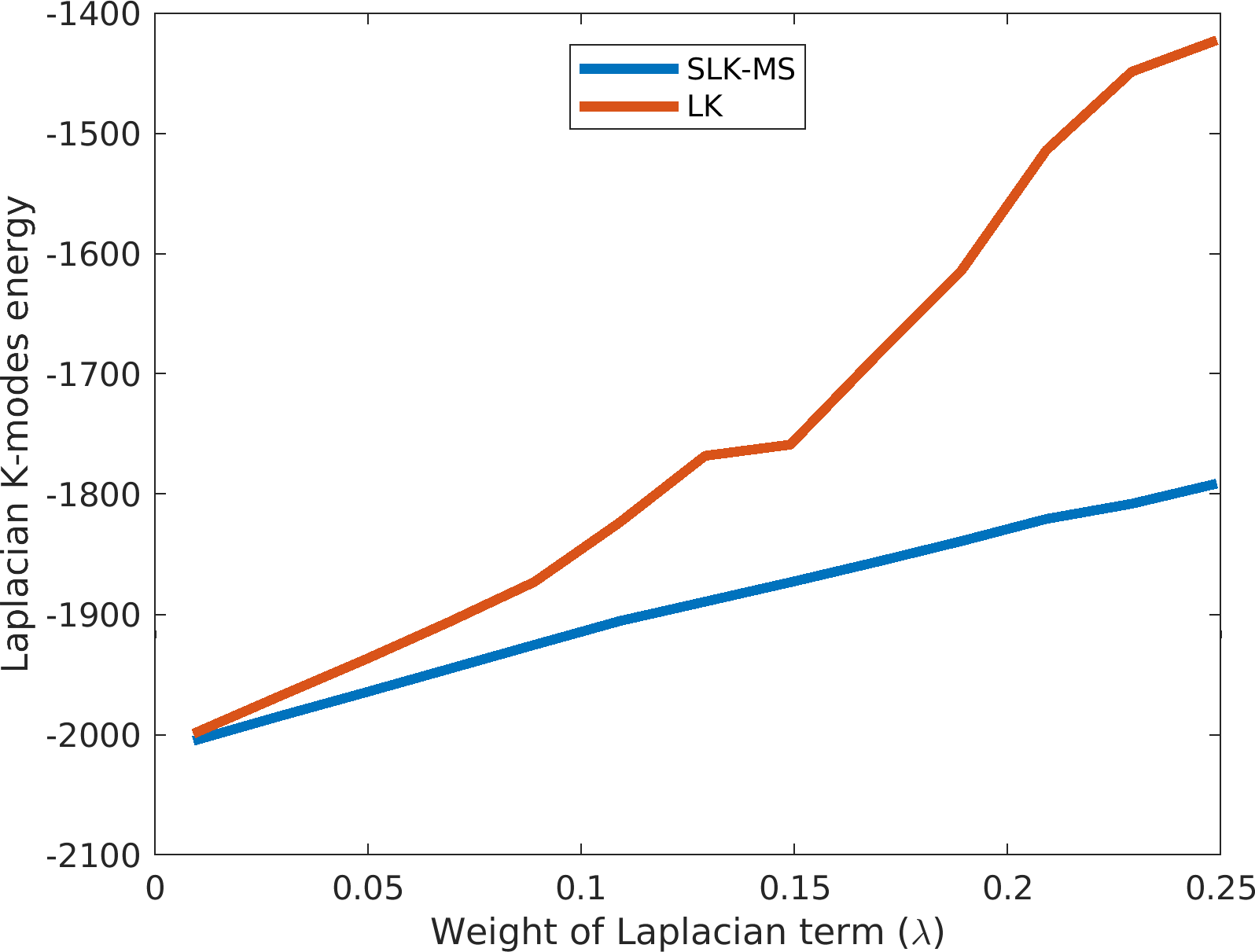}}
    \caption{Discrete-variable objective \eqref{eq:lkmode0}: Comparison of the objectives obtained at convergence for SLK-MS (ours) and LK \cite{WangCarreira-Perpinan2014}. 
    The objectives at convergence are plotted versus different values of parameter $\lambda$.}
    \label{fig:energy}
    \end{figure*}
    
    We resort to well-known benchmarks to report comprehensive evaluations of the proposed algorithm\footnote{Code is available at: https://github.com/imtiazziko/SLK}, as well as comparisons to the following related baseline methods: Laplacian K-modes (LK) \cite{WangCarreira-Perpinan2014}, K-means, NCUT \cite{ShiMalik2000}, K-modes \cite{salah2014convex,carreira2015review}, Kernel K-means (KK-means) \cite{dhillon2004kernel,Tang2019KernelCK} and Spectralnet \cite{shaham2018spectralnet}. 
    
    \subsection{Clustering datasets} 
       \begin{table*}[t]
        \caption{Datasets used in the clustering experiments.}
        \label{tab:tab-results-data}
        \centering
        \begin{tabular}{l*{4}{d{0}}}
        \toprule
        Datasets & \text{Samples}~(N) & \text{Dimensions}~(D) & \text{Clusters}~(L) &\text{Imbalance} \\
        \midrule
        MNIST (small) &2,000&784&10&1\\
        MNIST &70,000&784&10&\sim 1\\
        MNIST (GAN) &70,000&256&10& \sim 1\\
        Shuttle &58,000&9&7&4,558\\
        LabelMe (Alexnet)&2,688&4,096&8&1\\
        LabelMe (GIST) &2,688 &44,604 &8 & 1\\
        YTF &10,036&9,075&40& 13\\
        Reuters (code) &685,071&10&4& \sim 5\\
        \bottomrule
        \end{tabular}
    \end{table*}
        We used a mix of image datasets (MNIST, LabelMe, YTF) and non-image datasets (Reuters, Shuttle) for the clustering task. The overall summary of the datasets is given in Table \ref{tab:tab-results-data}. For each dataset, imbalance is defined as the ratio between the size of the largest and smallest clusters. We use three versions of the MNIST dataset \cite{lecun1998gradient}, which are detailed below. MNIST is the original dataset and contains the whole set of $70,000$ images. In order to compare to LK   \cite{WangCarreira-Perpinan2014}, which does not scale up for large datasets, we use MNIST (small). This reduced version includes only $2,000$ images by randomly sampling $200$ images per class. To obtain the MNIST (GAN) dataset, we train a GAN following \cite{goodfellow2014generative} on $60,000$ training images and extract the $256$-dimensional features from the discriminator network for the whole set of $70,000$ images. We use the publicly available auto-encoder in \cite{jiangZTTZ16} to extract $10$-dimensional features to generate the Reuters (code) dataset. LabelMe \cite{oliva2001modeling} consists of $2,688$ images divided into $8$ categories. Furthermore, we used the pre-trained AlexNet \cite{krizhevsky2012imagenet} and extracted the 4096-dimensional features from the fully-connected layer, leading to the LabelMe (AlexNet) dataset. To show the performances on high-dimensional data, we extract $44604$-dimensional GIST features \cite{oliva2001modeling} for the LabelMe dataset, which is referred to as LabelMe(GIST). Finally, the Youtube Faces (YTF) \cite{wolf2011face} dataset consists of videos of faces with $40$ different subjects.

    \subsection{Implementation details}
         Similarly to the few-shot experiments, we used $\rho$-nearest neighbor affinities to build $w_\mathcal{L}$, with fixed $\rho = 5$ for all the datasets. For the 
         large datasets, such as MNIST, Shuttle and Reuters, we used the \emph{Flann} library \cite{flann_pami_2014} with the KD-tree algorithm, which finds approximate 
         nearest neighbors. For the other smaller datasets, we used an efficient implementation of exact nearest-neighbor computations. We used the same sparse $\KK_{\mathcal{L}}$ matrix for the pairwise-affinity algorithms we compared with, i.e., NCUT, KK-means and Laplacian K-modes. Furthermore, for each of these baseline methods, we evaluated the default setting of affinity construction with tuned $\sigma$, and report the best result found. 
        Initial centers $\{\mm^0_k\}_{k=1}^{K}$ are based on K-means++ seeds \cite{arthur2007k}. We choose the best initial seed and regularization parameter $\lambda$ empirically based on the accuracy over a validation set (10\% of the total data). The $\lambda$ is determined by tuning over a small range from $1$ to $4$. In Algorithm \ref{algo:slk}, all assignment variables $\zz_p$ are updated in parallel. We run the publicly released codes for K-means \cite{scikit-learn}, NCUT \cite{ShiMalik2000}, Laplacian K-modes \cite{Carreira-Perpinan2007}, KK-means and Spectralnet \cite{shaham2018spectralnet}.
    
    \subsection{Clustering results}
        Table~\ref{tab:tab-results-imc} reports the clustering results, showing that, in most of the cases, our algorithms SLK-MS and SLK-Means yielded the best NMI and ACC values. For MNIST with the raw intensities as features, the proposed SLK-MS achieved almost ~80\% NMI and ACC, outperforming all the standard clustering methods by substantial margins. With better learned GAN features for MNIST (GAN), the accuracy (ACC) increases up to ~95\%, also outperforming standard clustering methods by substantial margins. For the Reuters (code) datasets, we used the same features and Euclidean distance based affinity as Spectralnet \cite{shaham2018spectralnet}, and obtained better NMI/ACC performances. The Shuttle dataset is quite imbalanced and, therefore, all the baseline clustering methods fail to achieve high accuracy. Notice that, in regard to ACC for the Shuttle dataset, our SLK methods outperformed all the other methods by a large margin (over $20\%$ of difference).
    
    \subsection{Comparison in terms of optimization quality}\label{sec:optimization_quality}
        To assess the optimization quality of our optimizer, we computed the values of discrete-variable objective ${\cal E}$ in model \eqref{eq:lkmode0} at convergence for our concave-convex relaxation (SLK-MS) as well as for the convex relaxation in \cite{WangCarreira-Perpinan2014} (LK). We compare the discrete-variable objectives for different values of $\lambda$. For a fair comparison, we use the same initialization, $\sigma$, $w_{\mathcal{F}}(\xx_p,\xx_q)$, $\lambda$ and mean-shift (MS) modes for both methods.
        As shown in the plots in Figure \ref{fig:energy}, and reported in Table \ref{tab:tab-results-energy}, our relaxation consistently obtained lower values of discrete-variable objective ${\cal E}$ at convergence than the convex relaxation in \cite{WangCarreira-Perpinan2014}. 
        
        \begin{table}
            \caption{Discrete-variable objectives at convergence for LK \cite{WangCarreira-Perpinan2014} and SLK-MS.}
            \label{tab:tab-results-energy}
            \centering
            \begin{tabular}{lSS}
            \toprule
            \text{Datasets} & \text{LK} \cite{WangCarreira-Perpinan2014}& \text{SLK-MS (ours)}\\
            \midrule
            MNIST (small) & \num{273.25}  &\num{67.09}  \\
            LabelMe (Alexnet) &\num{-1.51384e+03} &\num{-1.80777e+03} \\
            LabelMe (GIST) &\num{ -1.95490e+03} & \num{-2.02410e+03} \\
            YTF &\num{-1.00032e+4} &\num{-1.00035e+4} \\
            \bottomrule
            \end{tabular}
        \end{table}

\section{Conclusion}

We presented a general formulation for clustering and transductive few-shot learning, integrating Laplacian regularization, prototype-based terms and 
supervision constraints. Based on a concave-convex relaxation, our bound optimizer performs parallel point-wise updates, while guaranteeing convergence.
We reported comprehensive clustering and few-shot experiments over various data sets, showing that our methods yield competitive performances, in regard 
to both accuracy and optimization quality. Using standard training on the base classes, and without resorting to complex meta-learning and episodic-training 
strategies, our clustering methods yield state-of-the-art results in the transductive few-shot setting, outperforming significantly a large number of convoluted 
methods in the recent literature, across various models, settings and data sets. In the 1-shot settings, our SLK-MS clustering outperforms the best-performing 
state-of-the-art few-shot method by up to $3\%$. It also outperforms several convoluted meta-learning methods by substantial margins, up to $10\%$ or more. 
Surprisingly, we found that even the basic K-means already achieves competitive performances in comparison to the state-of-the-art in few-shot learning, 
outperforming substantially several very recent methods. These surprising performances point to the limitations of the current few-shot benchmarks, question the 
viability of a large body of complex few-shot learning works in the literature, and suggest simple clustering mechanisms as baselines.

\section*{Acknowledgements}
This research was supported by the National Science and Engineering Research Council of Canada (NSERC), via its Discovery Grant program (grant CRSNG RGPIN 2019-05954). 
It was also supported by computational resources provided by Compute Canada and Calcul Quebec.

\small
\bibliographystyle{abbrv}
\bibliography{readings}

\clearpage
\appendices

\section{Proof of Proposition 1}
\label{Appendix-A}

    \textbf{Proposition 1} states that, given current solution $\ZZ^{i,n} = [s_{p,k}^{i,n}]$ at outer iteration $i$, we have the following auxiliary function (up to an additive constant) for concave-convex relaxation \eqref{Concave-Convex-relaxation} and psd affinity matrix $\KK_{\mathcal{L}}$:
    \begin{equation}
    \label{Aux-function-a}
    {\cal A}^{i, n}(\ZZ) =  \sum_{p =1}^{N} \zz_p^t (\log (\zz_p) - \abf_p^i - \lambda{\bb}_p^{i, n})
    \end{equation}
    where ${\abf}_p^i$ and ${\bb}_p^i$ are the following $K$ dimensional vectors:
    \begin{subequations} 
    \begin{align}
    {\abf}_p^i &=  [a_{p,1}^i, \dots,a_{p,K}^i]^t,  \, \mbox{\em with} \, \, a_{p,k}^i = w_{\mathcal{F}}(\xx_p,\mm_k^i) \label{general-second-2} \\
    {\bb}_p^i &= [b_{p,1}^i, \dots,b_{p,K}^i]^t,  \, \mbox{\em with} \, \, b_{p,k}^i =   \sum_{q} w_{\mathcal{L}}(\xx_p, \xx_q) s_{q,k}^i \label{general-third-2} 
    \end{align}
    \end{subequations}
    
    \begin{proof}
    
    We want to study the sign of the following difference. Discarding the constant $2\sum_p d_p$ in \eqref{tight-relaxation} for the Laplacian term $\Lcal(\ZZ)$, one can write:
    \begin{align}\label{eq:difference}
        \mathcal{R}^i(\ZZ) - {\cal A}^{i, n}(\ZZ) &= \lambda~\sum_{p} \zz_p^t (\bb_p^{i, n} - \sum_q w_{\mathcal{L}}(\xx_p, \xx_q)\zz_q)  \\
        &= \lambda~\sum_{p, q} \zz_p^t w_{\mathcal{L}}(\xx_p, \xx_q) (\zz_q^i - \zz_q)
    \end{align}
    First, instead of considering the $N \times K$ matrix $\ZZ$, let us represent our assignment variables by a flattened vector $\zz \in [0,1]^{KN}$, that takes the form $\zz = [\zz_1,\zz_2,  \dots, \zz_N]$.
    Recall that each $\zz_p$ is a vector of dimension $K$ containing the probability variables of all labels for point $p$: $\zz_p = \lbrack s_{p,1}, \dots, s_{p, K}\rbrack^t$. Let $\Psi = - \KK_{\mathcal{L}} \otimes \mathbf{I}_{N}$, where $\otimes$ denotes the Kronecker product and $\mathbf{I}_{N}$ the $N \times N$ identity matrix. 
    One can rewrite \eqref{Concave-Convex-relaxation} in the following convenient form:
    \begin{equation}
    \label{Laplacian-bound}
    - \sum_{p,q} w_{\mathcal{L}}(\xx_p, \xx_q) \zz_p^t \zz_q = \zz^t\Psi \zz 
    \end{equation}
    Which allows us to write the difference \eqref{eq:difference} as:
    \begin{align}
        \mathcal{R}^i(\ZZ) - {\cal A}^{i, n}(\ZZ) = \lambda (\zz^t\Psi \zz - \zz^t\Psi \zz^{i, n} )
    \end{align}
    Now, notice that the Kronecker product $\Psi$ is negative semi-definite when $\KK_{\mathcal{L}}$ is positive semi-definite. Hence, the function $\zz \rightarrow \zz^t\Psi \zz$ is concave and, therefore, is upper bounded by its first-order approximation at current solution $\ZZ^{i, n}$ (outer iteration $i$, inner iteration $n$). In fact, concavity arguments are standard in deriving auxiliary functions for bound-optimization algorithms \cite{Lange2000}. With this condition, we have the following auxiliary function for the Laplacian-term relaxation:
    \begin{equation}
        \label{Aux-Function-CRF-concave}
        \zz^t\Psi \zz \leq (\zz^{i, n})^t \Psi \zz^{i, n} + (\Psi \zz^{i, n})^t (\zz - \zz^{i, n}) 
    \end{equation}
    Equation \eqref{Aux-Function-CRF-concave} directly implies that:
    \begin{align}\label{eq:interm1}
        \zz^t\Psi \zz - \zz^t\Psi \zz^{i, n} \leq 0
    \end{align}
    Which implies that:
    \begin{align}\label{eq:interm2}
        \mathcal{R}^i(\ZZ) \leq {\cal A}^{i, n}(\ZZ)
    \end{align}
    Finally note that for $\zz = \zz^{i, n}$, equation \eqref{Aux-Function-CRF-concave} becomes an equality, and so do following equations \eqref{eq:interm1} and \eqref{eq:interm2}, which finalizes the proof.
    \end{proof}

    
    
    

    
    


\section{Proof of Proposition \ref{prop:prototype_update}}
\label{appendix:k_modes_prototypes}

    Let us consider outer iteration $i$, with the associated current assignements  $\ZZ_i$. For each class $k$, let us prove that $\{ \mm_k^{i, n}\}_{n \in \mathbb N}$ is a Cauchy sequence. Recall the recursive relation:
    \begin{align}
        \mm_k^{i, n} &= \frac{\sum_p s_{p,k}w_{\mathcal{F}}(\xx_p, \mm_k^{i, n})\xx_p}{\sum_p s_{p,k}w_{\mathcal{F}}(\xx_p, \mm_k^{i, n})} \\
    \end{align}
    with $w_{\mathcal{F}}(\xx_p, \mm_k^{i, n}) = \exp(-\norm{\frac{\xx_p -  \mm_k^{i, n}}{\sigma}}^2)$, for some $\sigma > 0$. To simplify notations in what follows, we define:
    \begin{align}
        h(x) &= \exp(-x) \\
        u^n &= \sum_p s_{p,k}w_{\mathcal{F}}(\xx_p, \mm_k^{i, n}) \\
        v^n &= \sum_p s_{p,k}w_{\mathcal{F}}(\xx_p, \mm_k^{i, n})\xx_p 
    \end{align}
    \textbf{Step 1: } First, let us prove that $\{u^n\}_{n \in \mathbb N}$ is a Cauchy sequence. Recall that in a metric space, a convergent sequence is necessarily a Cauchy sequence. Therefore, we only need to show that $u^n$ is convergent (i.e bounded and strictly monotonic). \\
    Notice that for $x>0$, $0 \leq h(x) \leq 1$. Therefore:
    \begin{align}
        u^n &= \sum_p s_{p,k}h(\norm{\frac{\xx_p -  \mm_k^{i, n}}{\sigma}}^2) \\
            &\leq \sum_p s_{p,k} \leq N
    \end{align}
    Therefore, $u^n$ is bounded between 0 and N. Now, let us study the consecutive differences $\Delta^n = u^{n+1} - u^{n}$:
    \begin{align}
        \Delta^n = \sum_p  s_{p,k} \left[h(\frac{\norm{\xx_p -  \mm_k^{i, n+1}}^2}{\sigma^2}) - h(\frac{\norm{\xx_p -  \mm_k^{i, n}}^2}{\sigma^2}) \right]
    \end{align}
    Because $h$  is convex, one can say that $\forall a, b \in \mathbb R$:
    \begin{align}
        h(a) - h(b) \geq h'(b)(a-b)
    \end{align}
    And because $h'(b)=-h(b)$ in our case, one ends up with:
    \begin{align}
        h(a) - h(b) \geq h(b)(b-a)
    \end{align}
    Applied with $a = \frac{\norm{\xx_p -  \mm_k^{i, n+1}}^2}{\sigma^2}$ and $b=\frac{\norm{\xx_p -  \mm_k^{i, n}}^2}{\sigma^2}$, one can obtain:
    \[
    \begin{split}
        \Delta^n \geq& \sum_p s_{p,k} w_{\mathcal{F}}(\xx_p, \mm_k^{i, n}) \left[ \frac{\norm{\xx_p -  \mm_k^{i, n}}^2}{\sigma^2} - \frac{\norm{\xx_p -  \mm_k^{i, n+1}}^2}{\sigma^2} \right] \\ 
                 =& \frac{1}{\sigma^2}\sum_p s_{p,k} w_{\mathcal{F}}(\xx_p, \mm_k^{i, n}) \left[ \norm{\mm_k^{i, n}}^2 - \norm{\mm_k^{i, n+1}}^2  \right.\\
                  &\left.  -2<\mm_k^{i, n}, \xx_p> + 2 <\mm_k^{i, n+1}, \xx_p> \right] \\
    \end{split}
    \]
    Now is time to recall recursive relation $\mm_k^{i, n+1} = \displaystyle\frac{v^n}{u^n}$. By simply  expanding, one can end up with:
    \begin{align}
        \Delta^n \geq& \frac{1}{\sigma^2}\left[ \norm{\mm_k^{i, n}}^2 u^n - \frac{\norm{v^n}^2}{u^n} - 2 <\mm_k^{i, n}, v^n> + 2 \frac{\norm{v^n}^2}{u^n}\right] \nonumber\\
                    =& \frac{1}{\sigma^2}u^n \left[ \norm{\mm_k^{i, n}}^2 -2<\mm_k^{i, n},\mm_k^{i, n+1}> + \norm{\mm_k^{i, n+1}}^2 \right] \nonumber\\
                    \label{eq:relation_mn_un}=& \frac{1}{\sigma^2}u^n \norm{\mm_k^{i, n}-\mm_k^{i, n+1}}^2
    \end{align}
    Therefore, $\Delta^n >0$, which shows that $\{u^n\}_{n \in \mathbb N}$ is strictly increasing. This concludes the proof that $u^n$ is a convergent sequence, and therefore a Cauchy one.\\
    
    \textbf{Step 2: } Now, on top of concluding the proof that $\{u^n\}_{n \in \mathbb N}$ is a Cauchy sequence, Eq. (\ref{eq:relation_mn_un}) also offers an interesting relation between $\{\Delta^n\}_{n \in \mathbb N}$ and the sequence of interest $\{\mm_k^{i, n}\}_{n \in \mathbb N}$, which we can use.
    Indeed, for any $n_0, m \in \mathbb N$, we can sum Eq. (\ref{eq:relation_mn_un}):
    \begin{align}
        \label{eq:telescope_sum}\sum_{n=n_0}^{n_0+m} \Delta^n \geq& \frac{1}{\sigma^2} \sum_{n=n_0}^{n_0+m} u^n \norm{\mm_k^{i, n}-\mm_k^{i, n+1}}^2 \\
        \label{eq:take_min_u}\geq& \frac{u_0}{\sigma^2} \sum_{n=n_0}^{n_0+m} \norm{\mm_k^{i, n}-\mm_k^{i, n+1}}^2 \\
        \geq& \label{eq:triangle_ineq} \frac{u_0}{\sigma^2} \norm{\mm_k^{i, n_0+m}-\mm_k^{i, n_0}}^2
    \end{align}
    Where Eq. (\ref{eq:take_min_u}) follows because $\{u^n\}_{n \in \mathbb N}$ is strictly increasing, and Eq. (\ref{eq:triangle_ineq}) follows from the triangle inequality. Now, the left-hand side of Eq. (\label{eq:telescope_sum}) can be reduced to $\sum_{n=n_0}^{n_0+m} \Delta^n= u^{n_0+M+1} - u^{n_0}$. But because we proved in Step 1 that $\{u^n\}_{n \in \mathbb N}$ was a Cauchy sequence, this difference is bounded by a constant. This concludes the proof that $\{\mm_k^{i, n}\}_{n \in \mathbb N}$ is itself a Cauchy sequence in the Euclidean space.\\
    
    \textbf{Step 3: } We just proved that $\{\mm_k^{i, n}\}_{n \in \mathbb N}$ was a Cauchy sequence. Therefore $\{\mm_k^{i, n}\}_{n \in \mathbb N}$ can only converge to a single value $\mm_k^{i, *}$. We now use the continuity of function $g_k^i$ to conclude that $\mm_k^{i, *}$ has to be a solution of the initial equation (\ref{eq:zero_eq}):
    \begin{align}
        \mm_k^{i, *} &= \lim_{n \rightarrow \infty} \mm_k^{i, n+1} = \lim_{n \rightarrow \infty} g_i^k(\mm_k^{i, n}) \\
        &= g_i^k(\lim_{n \rightarrow \infty} \mm_k^{i, n}) = g_i^k(\mm_k^{i, *})
    \end{align}

\end{document}